\def\eqref#1{equation~\ref{#1}}
\def\1{\bm{1}}
\def\eps{{\epsilon}}
\DeclareMathAlphabet{\mathsfit}{\encodingdefault}{\sfdefault}{m}{sl}
\SetMathAlphabet{\mathsfit}{bold}{\encodingdefault}{\sfdefault}{bx}{n}
\def\gA{{\mathcal{A}}}
\def\gD{{\mathcal{D}}}
\def\gH{{\mathcal{H}}}
\def\gI{{\mathcal{I}}}
\def\gS{{\mathcal{S}}}
\def\gT{{\mathcal{T}}}
\newcommand{\E}{\mathbb{E}}
\newcommand{\qsafe}{\ensuremath{Q_{\text{safe}}}}
\newcommand{\esafe}{\ensuremath{\epsilon_\text{safe}}}
\newcommand{\gsafe}{\ensuremath{\gamma_\text{safe}}}
\newcommand{\sunsafe}{\ensuremath{\mathcal{S}_{\text{unsafe}}}}
\newcommand{\ptask}{\ensuremath{\gT_\text{pre}}}
\newcommand{\gtask}{\ensuremath{\gT_{\text{target}}}}
\newtheorem{theorem}{Theorem}
\newtheorem{corollary}{Corollary}[theorem]
\newtheorem{lemma}{Lemma}
\newtheorem{assumption}{Assumption}
\newcommand\numberthis{\addtocounter{equation}{1}\tag{\theequation}}
\algrenewcommand\ALG@beginalgorithmic{\footnotesize}
\algnewcommand{\LeftComment}[1][\footnotesize]{\State #1}
\titlespacing\section{0pt}{0pt plus 2pt minus 1pt}{0pt plus 2pt minus 1pt}
\titlespacing\subsection{0pt}{0pt plus 2pt minus 1pt}{0pt plus 2pt minus 1pt}
\titlespacing\subsubsection{0pt}{0pt plus 2pt minus 1pt}{0pt plus 2pt minus 1pt}
\algnewcommand{\BlockComment}[1][\footnotesize]{\State \textcolor{blue}{/*{#1}*/}}
\algrenewcommand\algorithmiccomment[1]{\hfill \textcolor{blue}{#1}}
\title{Learning to be Safe: Deep RL with a Safety Critic}
\author{ %
  Krishnan Srinivasan$^1$, Benjamin Eysenbach$^2$, Sehoon Ha$^3$, Jie Tan$^4$, Chelsea Finn$^{1,4}$ \\
%   Department of Computer Science\\
  $^1$Stanford University, $^2$Carnegie Mellon University, $^3$Georgia Tech, $^4$Robotics at Google \\
%   Stanford, CA 95304 \\
  \texttt{krshna@stanford.edu} \\
  %\And
  %Benjamin Eysenbach \\
%   Department of Computer Science\\
  %Carnegie Mellon University\\
%   Pittsburg, PA 15213 \\
  %\texttt{beysenba@cs.cmu.edu} \\
  %\And
  %Sehoon Ha \\
%   Department of Computer Science\\
  %Georgia Institute of Technology\\
%   Atlanta, GA 30332 \\
  %\texttt{sehoonha@gatech.edu} \\
  %\And
  %Jie Tan \\
  %Google Brain\\
%   Mountain View, CA 94043 \\
  %\texttt{jietan@google.com} \\
  %\And
  %Chelsea Finn \\
%   Department of Computer Science\\
  %Stanford University\\
%   Stanford, CA 95304 \\
  %\texttt{cbfinn@cs.stanford.edu} \\
}
\begin{document}

\maketitle

\begin{abstract}
Safety is an essential component for deploying reinforcement learning (RL) algorithms in real-world scenarios, and is critical during the learning process itself.
%Safety is critical for the deployment of reinforcement learning (RL) algorithms for real-world problems, and is particularly important during the learning process itself. 
A natural first approach toward safe RL is to manually specify constraints on the policy's behavior. 
However, just as learning has enabled progress in large-scale development of AI systems, learning safety specifications may also be necessary to ensure safety in messy open-world environments where manual safety specifications cannot scale.
Akin to how humans learn incrementally starting in child-safe environments, we propose to \emph{learn} how to be safe in one set of tasks and environments, and then use that learned intuition to constrain future behaviors when learning new, modified tasks. We empirically study this form of \emph{safety-constrained transfer learning} in three challenging domains: simulated navigation, quadruped locomotion, and dexterous in-hand manipulation. In comparison to standard deep RL techniques and prior approaches to safe RL, we find that our method enables the learning of new tasks and in new environments with both substantially fewer safety incidents, such as falling or dropping an object, and faster, more stable learning. This suggests a path forward not only for safer RL systems, but also for more effective RL systems.
\end{abstract}

%\vspace{-1em}
\section{Introduction}
%\vspace{-0.5em}
\label{sec:intro}

%%CF.1.18: In general, this is written quite well. But, I think this story if perhaps too focused on the curriculum aspect and less on the learning safety aspect. Here's one way we could include the latter: 
While reinforcement learning systems have demonstrated impressive potential in a variety of domains, including video games~\cite{Mnih2013PlayingAW} and robotic control in laboratory environments~\cite{levine2015end}, safety remains a critical bottleneck when deploying these systems for real world problems. One natural approach to ensure safety is to manually impose constraints onto the learning process, to prevent the agent from taking actions or entering states that are too risky. For example, this could be done by manually shaping a reward function or imposing constraints on the policy to avoid actions that may lead to unsafe outcomes~\cite{45450}. 
%%CF.1.26: Can we add citations to a couple more additional works here?
% However, the success of machine learning techniques in a breadth of fields has taught us the pitfalls of manual engineering and specification, particularly in diverse open-world environments 
However, the recent successes of machine learning techniques in a wide range of applications indicate advantages to avoiding manual specification and engineering, as such manual approaches will not generalize to new environments or robots~\cite{robonet}. This begs the question: why should we make an exception with safety? In this paper, we propose to explore the following question: can we \emph{learn} how to be safe, avoiding the need to manually specify states and actions that are risky?

% TODO: change Q_safe to Q_safe^\pibar
% TODO: finetuneing phase, learning pi^* for the given new task instance, s.t. Q_safe^pibar < eps 
% DONE
\begin{figure*}[t]
    \centering
    \includegraphics[width=0.99\linewidth]{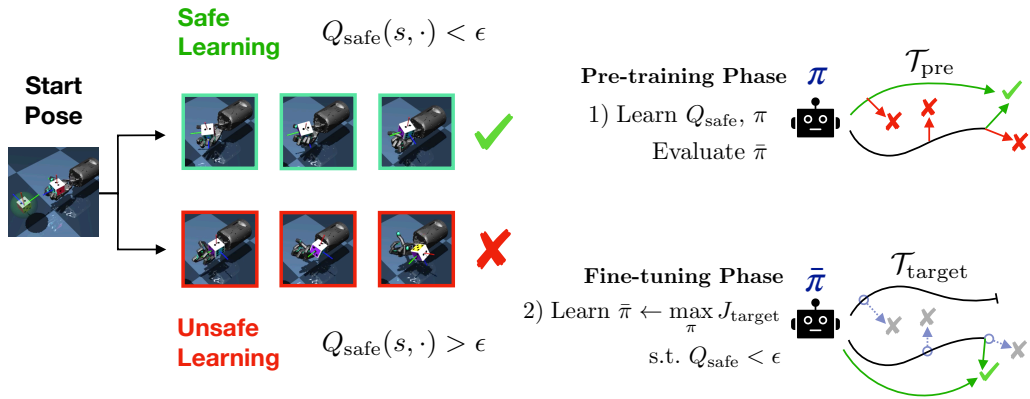}
    \vspace{-0.1cm}
    \caption{Our approach enables safe RL by pre-training a critic $Q_\text{safe}^{\bar{\pi}}$ that is trained by constraining the actions of a policy $\pi$. This yields safe policy iterates (resulting from optimizing Eq.~\ref{eq:pol-ft}) that achieve safe episodes throughout fine-tuning, while standard RL approaches will visit unsafe states during adaptation, which results in more failed episodes.}
    \label{fig:hand-safe}
    \vspace{-1em}
\end{figure*}

% 2nd paragraph: Learning and safety may seem inherently at odds. However, when children learn to walk, ...
At first, learning and safety may seem inherently at odds, since learning to be safe requires visiting unsafe states. However, when children learn to walk they can leverage the positive and negative experience of getting up and falling down to gradually begin to stand, walk, and eventually run. Throughout this process, they learn how to move around in the world safely and adapt these skills as their bodies grow and are exposed to new and unseen environments, all while limiting how badly and often they are injured. This way of safely exploring while learning is something that can not only be used when learning to walk, but in other areas where safety is critical not only for survival, but generally helpful for successfully learning the task.
% For instance, after learning how to avoid colliding into objects or falling down, it can be safer and easier to learn new skills such as ice-skating or driving.
For instance, an agent can more safely learn to drive a car if it already knows how to avoid collisions.

%%CF.1.18: You took a bit of a jump in the last sentence. Can you break it down a bit? e.g. "where safety is critical not only for survival, but also helpful for successfully learning the task."

%%CF.1.18: At this stage, we need to make it clear what we're proposing. One way to do that would be to replace the next two paragraphs with:

Motivated by how humans learn to acquire skills that become gradually harder and more risky, we believe that our reinforcement learning agents can similarly leverage previous experience to understand general safety precautions in safety-training environments, and use this understanding to avoid failures when learning \emph{new behaviors}. This previous experience could come from non-safety-critical environments such as when a human is present, when a robot is moving slowly, in simulation, or from pre-collected offline data of safety incidents such as car accidents.  To this end, we propose to learn task-agnostic models of safety, and then use these models such as preventing unsafe behavior when learning new tasks via reinforcement. By doing so, we can ensure that the latter reinforcement learning process itself is safe, which in principle, can enable real-world deployment. 
The primary contribution of this work is a framework for safe reinforcement learning, by learning safety precautions from previous experience. Our approach, safety Q-functions for reinforcement learning (SQRL), learns a critic that evaluates whether a state, action pair will lead to unsafe behavior, under a policy that is constrained by the safety-critic itself. This is achieved by concurrently training the safety-critic and policy in separate pre-training and task fine-tuning phases, visualized in Figure~\ref{fig:hand-safe}. During pre-training, the agent is allowed to explore and learn about unsafe behaviors. In the fine-tuning phase, we train the policy on a new target task, while simultaneously constraining the policy's updates and selected actions with the safety-critic. We evaluate our method in three challenging continuous control environments: 2D navigation, quadrupedal robot locomotion, and dexterous manipulation with a five-fingered hand. In comparison to prior state-of-the-art RL methods and approaches to safety, we find that SQRL provides consistent and substantial gains in terms of both the safety of learning and the learning efficiency on new tasks.

\section{Related Work}
%\vspace{-0.5em}
\label{sec:related}
Our method builds on a rich body of work on safe RL~\cite{garcia2015comprehensive}. Prior works define safety in a variety of ways, using constraints on the expected return or cumulative costs~\cite{heger1994consideration, achiam2017constrained}, using risk measures such as Conditional Value at Risk and percentile estimates~\cite{Tamar2014PolicyGB,chow2014algorithms, duan2020distributional, ma2020distributional}, and defining regions of the state space that result in catastrophic or near-catastrophic failures \cite{lipton2016combating, eysenbach2017leave, fisac2019bridging}. We define safety as constraining the probability that a catastrophic failure will occur below a specified threshold, similar to how it is defined in prior work on MDPs with constrained probability of constraint violation (CPMDPs)~\cite{geibel2006reinforcement}.

% P2: Summarize some of the main methods/metrics used in previous papers that are most relevant
Researchers have determined a wide variety of challenges underlying safe RL, including lower-bounds on off-policy learning and evaluation~\cite{thomas2015high,dann2018policy}, robustness to perturbations~\cite{smirnova2019distributionally,pinto2017robust,xu2010distributionally}, learning to reset to safe states~\cite{moldovan2012safe, eysenbach2017leave}, and verifying the safety of a final policy~\cite{julian2019verifying}. One common focus of prior works is to learn a safe policy by minimizing worst-case discounted cost using conditional value at risk (CVaR) formulation~\cite{heger1994consideration,chow2017risk,tang2019worst}. However, these methods do not guarantee or offer ways to reason about the safety of policies attempted during the learning process. In contrast, we present a method of learning a safety layer that can adapt in parallel with a fine-tuned policy in an online fashion, making it better suited in the transfer learning setting where the environment or task has changed and the trained agent must safely adapt.
%Following prior work~\cite{dalal2018safe, Chow2019LyapunovbasedSP, achiam2017constrained, leike2017ai}, we focus on directly incorporating safety into exploration and learning. 

While there are many approaches to safety in RL, constrained MDPs are commonly used~\cite{altman1999constrained, achiam2017constrained, Tessler2018RewardCP}, where the agent optimizes the reward signal given constraints on the accumulated cost during the rollout. In work by \cite{chow2018lyapunov, Chow2019LyapunovbasedSP}, Lyapunov state-action functions are constructed using a feasible baseline policy, to enable safe Q-learning, policy gradient, and actor gradient methods, which map a learned policy into a space of policies that satisfy the Lyapunov safety condition. Similar to our work, this approach instills safety into the training process by directly evaluating every state's constraint cost. However, unlike these works, we do not assume well-defined, actionable safety criterion, and instead only access a sparse safety label which the agent uses to infer an actionable safety criterion and transfer it to new task instances.
%%CF.1.23: Can we put this in the long summary sentence, e.g. as "optimizing for a final policy that is safe~\cite{}"
%\cite{chow2017risk} proposes an algorithm which optimizes a policy with respect to the CVaR under selected risk constraints, by optimizing over the gradient of a corresponding unconstrained dual problem, but does not offer guarantees or ways to reason about the safety of policies attempted during the learning process.

%[summarize cvar,cpo,wcpg,other_non_learned methods; make it clear how our method differs from each
% Third paragraph on methods that learn notions of safety (e.g. prediction probability of collision, hans2008, learning to fly by crashing, etc.). Make it clear how our method generalizes some of the ideas in these prior works.

%%CF.1.23: Make sure to transition from the previous paragraph, and make it clear what these approaches are categorized (i.e. approaches to safety that incorporate learning into various aspects of the process.)
There have been many approaches to handling risk and safety in RL that have varied at an architectural level, which bear some similarities to our work. {In \cite{geibel2005risk}, Q-learning is adapted with an additional risk value estimate based on unsafe states to learn a deterministic policy that is risk-averse. } \cite{hans2008safe} proposed controlling a plant with RL, while learning a ``safety function'' from data to estimate a state's degree of safety, as well as a ``backup policy'' for returning the plant from a critical state to a safe regime. {However, neither approach tries transferring learned safety to a different task/environment with modified rewards or dynamics.}  \cite{eysenbach2017leave} showed a similar approach in a robotics domain, learning a ``reset policy'' to return a robot to its initial state distribution, which triggered if a corresponding value function drops below a certain threshold.
Our model builds on this idea by directly learning to predict failures, using this predictor to \emph{preemptively} filter unsafe actions, removing the need for an additional reset policy.
Additionally, we use our learned constraints to provide constrained updates to our policy during fine-tuning.

Finally, a number of works have learned models that predict the probability of future collisions for safe planning and control of mobile robots~\cite{richter2017safe,kahn2017uncertainty}. We instead aim to develop a general framework for many different modes of safety, and empirically observe the importance of several new design choices introduced in our algorithm on challenging continuous control problems.

\section{Preliminaries}
%\vspace{-0.5em}
\label{sec:prelim}

% \subsection{Reinforcement Learning }

%%CF.1.20: This paragraph contains things that are specific to our problem setting, rather than just prior work needed to understand the method. This section should only contain old things. The safety things belong in the next section. You describe the MDP framework without describing how we modify it. We also don't actually modify the actual MDP framework itself. We just instantiate it with a particular terminal state corresponding to an unsafe event.

For our method, we consider the standard RL problem on a Markov Decision Process (MDP), which is specified by the tuple $\langle\mathcal{S}, \mathcal{A}, \gamma, r, P, \mu \rangle$. $\mathcal{S}$ and $\mathcal{A}$ are the state and action spaces respectively, $\gamma$ is a discounting factor, $r: \mathcal{S}\times \mathcal{A} \to \mathbb{R}$ is the reward function, and $P(\cdot|s,a)$ and $\mu$ are the state transition dynamics and initial state distributions respectively.
The aim of entropy-regularized reinforcement learning~\cite{ziebart2010modeling} is to learn a policy $\pi_\theta: \gS \times \gA \rightarrow [0, 1]$ that maximizes the following objective:
\begin{equation}
    J(\theta) = \sum_{t=0}^{T-1} \mathbb{E}_{(\mathbf{s}_t, \mathbf{a}_t) \sim \rho_\pi}[r(\mathbf{s}_t, \mathbf{a}_t) + \alpha \mathcal{H}(\pi_\theta(\cdot | \mathbf{s}_t))].
\end{equation}
We use $\gH(\pi_\theta(\cdot|s_t))$ to denote the policy's action entropy and $\alpha > 0$ as a tuning parameter.
\section{Problem Statement}
%\vspace{-0.5em}
\label{sec:problem}
%%CF.1.20: Commenting out the subsection. Don't have a section with just a single section.
%\subsection{Task-Agnostic Safety}

%%CF.1.20: "our method" -- We're not talking about the method in this section! We're talking about the problem statement. I would also recommend starting with an informal description of the problem setting and then a more formal one. For example, you could start with something like: "Our goal is to enable an agent to learn how to be safe, and then use this learned notion of safety when learning new tasks. Our problem setting will hence be made up of two phases. The first phase is\emph{safety pre-training}, when ... The second phase ..."

In many robot learning settings, the conditions that lead to a catastrophic failure are difficult to fully specify before learning. While unsafe states are easily identifiable, such as the robot falling down or dropping an object, the task of formally specifying safety constraints that capture these states is non-trivial, and can be potentially biased or significantly hinder learning.

This is especially true if determining the agent's final behavior is difficult due to uncertainty about the underlying reward function and safety constraints. However, for many problems, the conditions for a catastrophic failure during an episode are easier to determine (e.g. determining that the robot has fallen down, or the object was dropped). Therefore, the challenge is to learn an optimal policy for a task while minimizing the frequency of catastrophic failures during training. To achieve this, we include 1) a safe pre-training environment, in which failures can be tolerated, and 2) a safety-incident indicator, $\mathcal{I}(s)$, which indicates if a given state is unsafe or not. In both the pre-training and target task environments, we treat these failure states as terminal. This setup enables the agent to learn to be safe and then transfer to new tasks without accumulating additional failure costs.

Under this framework, we propose learning in two phases: 1) learning an exploratory policy that solves a simpler/safer task in the pre-training environment, and 
2) \textcolor{black}{transferring the learned policy to} a more safety-critical target task with guarantees to safety. Our \emph{safety-aware} MDP for the pre-training and target tasks are respectively defined as $\mathcal{T_\text{pre}} = \langle \mathcal{S}, \mathcal{A}, P_\text{pre}, r_\text{pre}, \mu_\text{pre}, \mathcal{I} \rangle$, and $\mathcal{T_\text{target}} = \langle \mathcal{S}, \mathcal{A}, P_\text{target}, r_\text{target},\mu_\text{target},  \mathcal{I} \rangle$. 
% Formally,
% our reinforcement learning problem becomes finding an optimal policy where,
After pre-training in a safety test-bed task ($\ptask$), the agent must optimize its expected return in the target training task ($\gtask$) while minimizing visits to unsafe states:
% how frequently it reaches unsafe states. This makes the goal of policy learning %\vspace{-.3cm}
\begin{align*}
\max_{\pi} \sum_{t=0}^T \E_{(s_t,a_t)\sim \rho_\pi}
\left[r_\text{target}(s_t, a_t) \right] ~ \textrm{s.t.}\; \E_{s_t \sim{\rho_\pi}}\left[\mathcal{I}(s_t)\right] < \esafe. \;
% \label{eq:target-obj}
\end{align*}
This objective implies that for the target task, the policy should always stay outside of the unsafe region $\sunsafe = \{s \;|\; \gI(s) = 1\}$ with probability $\esafe$.
This also introduces the notion of a target safety threshold, $\esafe$, which serves as an upper-bound on the expected risk \textcolor{black}{of a given policy}. When used as a single-step constraint over actions sampled during policy rollouts, it can guarantee that the policy $\pi$ is safe up to that threshold probability under certain assumptions. Importantly, we aim to impose this safety constraint not just at convergence, but throughout the training process. %In our practical algorithm, we solve this CMDP using the Lagrangian method, with a constraint on penalty violation~\cite{altman1999constrained}.
%%CF: I'm commenting out this last sentence because it seems more about the method than the problem.

%\vspace{-0.5em}
\section{Safety Q-functions for RL}
%\vspace{-0.5em}
\label{sec:method}

To address the problem formulation described above, we introduce the Safety Q-functions for Reinforcement Learning (SQRL), which simultaneously learns a policy and notion of safety in the first phase, and later fine-tunes the policy to the target task using the learned safety precautions. In turn, the safety in the second phase of training can be ensured during the learning process itself. This is done by the safety-critic, $\qsafe^{\bar{\pi}}$, which estimates future failure probability of a safety-constrained policy given a state, action pair. The safety-constraints learned by $\qsafe^{\bar{\pi}}$ can induce the policy to be safe, even when the task has changed. In this section, we will describe our approach and analyze its performance.

\subsection{Pre-Training Phase}

%%CF.1.26: See my notes above for what should go in this section (which I also copied below). you need to formally describe how pre-training is done -- define what the safety Q function represents, including the discount factor. 
%% Describe how the discount factor can be interpreted. 
%% Write down the "Bellman" equation for the safety-Q function. 
%% Then motivate and write down the equation for the version where the policy is constrained by the safety Q function itself. Everything needs to be formally written out in equations.
During pre-training, the goal is to learn a safety-critic $\qsafe^{\bar{\pi}}$ and the optimal pre-trained policy $\pi_\text{pre}^*$ which will serve as the initialization for training on the target task.  For our problem, the optimal safety-critic for the pre-training task $\ptask$ estimates the following expectation for a given policy $\pi$:
% Reviewer comment: Further, the optimal Q*_safe does not correspond to the Q^pi_safe on line 215 that introduces a discount factor and no more represents a probability of a failure.
\begin{equation}
Q_\text{safe}^{\pi}(s_t, a_t) = \gI(s_t) + (1-\gI(s_t))
    %\E_{\substack{s_{t'} \sim P_\text{pre}(\cdot|s_t, a_t) \\ a_{t'} \sim \pi(\cdot|s_{t'})}}  \qsafe^\pi (s_{t'}, a_{t'}) 
    %\E_{\substack{s_{t'} \sim P_\text{pre}(s_{t'-1}, \pi(s_{t'-1}))}} \left[ \gsafe^{t'-t} \mathcal{I} (s_{t'})\right],
    \sum_{t'=t+1}^T
    \E_{\substack{ s_{t'} \sim P_\text{pre}(\cdot | s_t, a_t) \text{ for } t'=t+1\\
    s_{t'} \sim P_\text{pre}, \pi \text{ for } t'>t+1
    }} \left[
    \gsafe^{t'-t} \mathcal{I} (s_{t'})
    \right],
\label{eq:failure-prob}
\end{equation}
The target failure estimate $Q_\text{safe}^\pi$ estimates the true probability that {$\pi$} will fail in the future if, starting at state $s_t$, it takes the action $a_t$. This raises two key questions: 1) how should such a model of safety be trained, 2) with what data do we train it, and 3) for what policy $\pi$ do we train it?

%% TODO: Add ablation showing that training safety-critic with Bellman equation for failure instead of normal supervised learning yields better safe-policy performance (just show for the 2D maze environment)

Regarding the first question, if the safety-critic were a binary classifier, supervised learning could be used to just estimate the safety labels for a single timestep. However, to learn the cumulative failure probability in the future, the safety-critic must reason over future timesteps also. Hence, the safety-critic is trained using dynamic programming, as in standard Q-learning, using a discount term, $\gsafe$, to limit how far in the past the failure signal is propagated. This cumulative discounted probability of failure is estimated by the Bellman equation:% corresponding to the probability of safety as:
\begin{equation*}
\hat{Q}_\text{safe}^{\pi} (s, a) = \gI(s) + (1 - \gI(s))
    \mathbb{E}_{\substack{s' \sim P_\text{pre}(\cdot|s,a)\\ a'\sim \pi(\cdot|s')}}
        \left[ \gsafe \hat{Q}_\text{safe}^{\pi}(s', a')\right].
\end{equation*}
Parametrizing the safety Q-function $\hat{Q}_\text{safe}^\pi$ as a neural network with parameters $\psi$ yields the following objective:
\begin{align}
    J_{\text{safe}}(\psi) = \E_{(s ,a, s', a')\sim \rho_\pi} \bigg[\bigg( \hat{Q}_\text{safe}^\pi(s,a) - \nonumber 
    \left(\gI(s) + (1 - \gI(s)) \gsafe \bar{Q}_\text{safe}^\pi(s', a')\right) \bigg)^2 \bigg],
    %\tilde{Q}^\text{safe} (s',a') \right)^2 \right],
\end{align}
where $\bar{Q}_\text{safe}$ corresponds to the delayed target network.

\begin{figure*}[ttt!]

\begin{minipage}[t]{2.76in}
\begin{algorithm}[H]
\caption{SQRL Pre-training}\label{alg:sqrl-pre}
\begin{algorithmic}[1]
\Procedure{Pretrain}{$n_\text{pre}, \ptask, \esafe, \gsafe$} \label{safesac-pretrain}
\State Initialize replay buffers, $\gD_\text{safe}, \gD_\text{offline}$
\State Initialize networks, $\hat{Q}_\text{safe}^\psi, \pi_\theta, Q_{\phi_1}, Q_{\phi_2}$.
\State $s \sim \mu_\text{pre}(\cdot)$
\For{$n_\text{pre}$ steps} %\Comment{Begin pre-training phase}
\For{$n_\text{off}$ steps} %\Comment{train off-policy method}
    \State Sample action, $a \sim \pi_\theta(\cdot | s)$
    \State $s' \sim P_\text{pre}(\cdot|s,a)$
    \State $\gD_\text{offline}\text{.add}((s, s', a, r_\text{pre}(s, a)))$
    % \State $\theta, \phi_1, \phi_2 \gets$ \Call{SACUpdate}{$\theta, \phi_1, \phi_2, \gD_\text{offline}$}
    % \LeftComment{Perform SAC update on actor, critic, $\alpha$}
    \State Apply SAC update to $\theta$, $\phi_1$, $\phi_2$, $\alpha$
    % \State $\theta \gets \theta - \lambda_\theta \hat{\nabla}_\pi J_\pi(\theta,\alpha)$
    % \State $\phi_i \gets \phi_i - \lambda_{\phi_i} \hat{\nabla}_{\phi_{i}} J_Q(\phi_i)$ for $i\in\{1,2\}$
    % \State $\alpha \gets \alpha - \lambda_{\alpha} \hat{\nabla}_\alpha J(\alpha)$
    \If {$\gI(s') = 1$} $s' \sim \mu_\text{pre}(\cdot)$\EndIf
    \State $s \gets s'$
\EndFor
\LeftComment{{\fontfamily{cmtt}\selectfont \textcolor{blue}{// Collect $k$ on-policy rollouts}}}
\For{$i \gets 1,k$ episodes}
    \State $\tau_i \gets$ \Call{Rollout}{$\bar{\pi}_\theta, \ptask$}
    \State $\gD_\text{safe}\text{.add}(\tau_i)$ 
\EndFor
\State $\psi \gets \psi - \hat{\nabla}_\psi J_\text{safe}(\psi)$
\EndFor
\Return{$\pi_\theta, \hat{Q}_\text{safe}^\psi$} \label{safesac-pretrain-end}
\EndProcedure
\end{algorithmic}
\end{algorithm}
\end{minipage}
\hfill
\begin{minipage}[t]{2.76in}
\begin{algorithm}[H]
\begin{algorithmic}[1]
\caption{SQRL Fine-tuning}\label{alg:sqrl-fine}
\Procedure{Finetune}{$n_\text{target}, \gtask$, $\hat{Q}_\text{safe}$} \label{safesac-finetune}
\State  $s \sim \mu_\text{target}(\cdot)$, $\;\gD_\text{offline} \gets \{\}$
\For{$n_\text{target}$ steps}
    \State Sample action, $a \sim \bar{\pi}_\theta(\cdot |s)$
    \LeftComment{{\fontfamily{cmtt}\selectfont \textcolor{blue}{// where $\bar{\pi}_\theta~\gets~\Gamma(\pi_\theta)$ (Eq.~\ref{eq:safe-pi})}}}
    \State $s'~\sim~P_\text{target}(\cdot|s,a)$
    \State $\gD_\text{offline}\text{.add}((s, s', a, r_\text{target}(s, a)))$
    \LeftComment{{\fontfamily{cmtt}\selectfont \textcolor{blue}{// Dual gradient ascent on Eq.~\ref{eq:pol-ft}}}}
    \State{ $\theta~\gets~\theta~+~\lambda \hat{\nabla}_\theta J_{\text{target}}(\theta,\alpha,\nu)$}
    % \State{ $\alpha, \nu \gets \theta + \lambda \hat{\nabla}_\theta J_{\text{target}}(\theta, \alpha, \nu)$}
    \State $\alpha~\gets~\alpha~-~\lambda \hat{\nabla}_\alpha J_\text{target}(\theta,\alpha,\nu)$
    \State $\nu~\gets~\nu~-~\lambda \hat{\nabla}_\nu J_\text{target}(\theta,\alpha,\nu)$
    
    \If{$\gI(s') = 1$} $s' \sim \mu_\text{target}(\cdot)$\EndIf
    \State $s \gets s'$
\EndFor
\Return{$\pi_\theta$} \label{safesac-finetune-end}
\EndProcedure
\end{algorithmic}
\end{algorithm}
\end{minipage}
\vspace{-5mm}
\end{figure*}

For the safety model to learn new tasks without failure, it must first explore a diverse set of state-action pairs, including unsafe state-action pairs, during pre-training. This motivates using maximum entropy reinforcement learning, which encourages exploration by maximizing both the reward and the entropy of the policy. In our implementation, we choose the soft actor-critic (SAC) algorithm~\cite{haarnoja2018soft}, though in principle, any off-policy algorithm can be used as long as it explores sufficiently. 

Finally, we must determine which policy to optimize the safety-critic under. If we use the entirety of training experience from SAC, which contains data from a mixture of all policies encountered, this may result in a safety Q-function that is too pessimistic. For example, if the mixture of policies includes a random, unsafe policy from the start of training, then even a cautious action in one state may be considered unsafe, because risky actions are observed afterwards. Therefore, we are faced with a dilemma: while we need diverse data that includes a range of unsafe conditions, we also need the actions taken to be reflective of those taken by safer downstream policies constrained by the safety-critic to avoid learning a pessimistic safety-critic. To address this conundrum, the safety-critic is optimized under the mixture of policies that are constrained by the safety-critic itself which we denote as $\bar{\pi}_i$ where $i$ is the iteration. This mitigates pessimism, since for a safe state-action pair in the data, subsequent behavior from that point will be constrained as safe, hence producing a reliable target label. %To achieve this, we deliberately bias our samples from $\pi(a \mid s)$ during training to have failure probability near the target threshold, $\esafe$, so that they can teach the safety-critic which actions actually cause failures and which ones are safe.
%%CF: This above sentence is already mentioned in the practical implementation, so I think it's okay to omit here.

%However, training a standard MaxEnt RL policy during this phase may lead the safety-critic to becoming overly pessimistic, since the data collected during this phase will be from policies that are trying to optimize diversity of sampled actions.

% This process deliberately biases the action sampling to get better at sampling actions that are near the threshold, is that necessarily good? 

To this end, the safety-critic and a stochastic safety-constrained policy $\bar{\pi}_\epsilon(a \mid s)$ are jointly optimized during pre-training.
%During pre-training, the goal is to train a safety-critic that can accurately predict the failure probability (Eq.~\ref{eq:failure-prob}), and be used to reject samples $a \sim \pi(a \mid s)$ when they have a high probability of failure, $\qsafe^\pi(s,a) > \esafe$. 
Let the safety policy $\bar{\pi}_\epsilon$ be a policy that has zero probability of sampliing an action $a$ where $\hat{Q}_\text{safe}(s,a) \geq \epsilon$. Let the set of all policies where this holds be $\Pi_\text{safe}^\epsilon$. Next, the projection $\Gamma^\epsilon_\text{safe} : \pi \rightarrow \bar{\pi}_\epsilon \in \Pi^\epsilon_\text{safe}$ be the projection mapping any policy onto its closest policy in $\Pi_\text{safe}$. Consi a natural projection operator, using our safety-critic definition, that does the following:
\vspace{-.1cm}
\begin{equation}
    \bar{\pi}(a \mid s) \propto \begin{cases} \pi(a \mid s) & \text{if } \hat{Q}_\text{safe}^\pi(s, a) \le \epsilon \\ 0 & \text{otherwise} \end{cases}.
    \label{eq:safe-pi}
\end{equation}
By masking the output distribution over actions from $\pi$ to only sampling actions where this safety condition is met, the policy $\bar{\pi}$ is ensured to be safe.
% This definition now allows us to outline the final procedure of our pre-training process.

Our pre-training process, summarized in Algorithm~\ref{alg:sqrl-pre}, proceeds as follows.
Let $\epsilon=\esafe$ define our target safety threshold, beyond which actions should be rejected for being too risky. Then, at each iteration of training, we collect data from our current policy with actions constrained by the current safety-critic (i.e. projecting our policy into $\Pi_\text{safe}^{\epsilon}$), add the data to our replay buffer, update our safety-critic under the mixture of policies represented in the replay buffer, and update our policy using a MaxEnt RL algorithm.
% The result of this pre-training is a safe policy that solves the pre-training task, $\bar{\pi}_\text{pre}$ and a safety Q-function $\qsafe^\pi$. 
Pre-training returns a safe policy that solves the pre-training task, $\bar{\pi}_\text{pre}$, and a safety Q-function $\hat{Q}_\text{safe}$. 
% The full pre-training process is outlined in Algorithm~\ref{alg:sqrl-pre}.

% \todo{need to formally describe how this is done, using equations. I think we should put the projection operator here, since it is needed to understand how pre-training is done.}
% \todo{the stuff below on how to sample actions belongs in a separate section on implementation. I copied it there}

\vspace{-0.1em}
\subsection{Fine-Tuning Phase}
\vspace{-0.1em}
In the fine-tuning phase, SQRL initializes the policy to the safety-constrained pre-training policy $\bar{\pi}_\text{pre}$ and fine-tunes to a new safety-critical target task $\mathcal{T_\text{target}}$. To do this, we formulate a safety-constrained MDP using the pre-trained safety-critic $\hat{Q}_\text{safe}$. While fine-tuning on the target task, all data is collected using policies that are constrained by $\hat{Q}_\text{safe}$, following the data collection approach used in pre-training, and the policy is updated with respect to the target task reward function. We additionally add a safety constraint cost to the policy objective to encourage the unconstrained policy $\pi'$ to sample actions that will fall within the distribution of $\Gamma^\epsilon_\text{safe}(\pi') = \bar{\pi}'_\epsilon$, while optimizing for expected return. This modifies the standard soft-actor policy objective to 
\begin{align*}
    J_\text{target}(\theta, \alpha, \nu) = \E_{\substack{(s, a) \sim \rho_{\bar{\pi}_\theta'}\\ a' \sim \pi'_\theta(\cdot | s)}} \Big[ 
    r_\text{target}(s, a) - \alpha(\log\pi_\theta'(\cdot | s) - \bar{\mathcal{H}}) + \nu\big(\esafe - \hat{Q}_\text{safe}(s, a')\big)\Big], \numberthis
    \label{eq:pol-ft}
\end{align*}
where $\alpha$ and $\nu$ are Lagrange multipliers for the entropy and safety constraints respectively, and where $\bar{\mathcal{H}}$ denotes the target entropy (a hyperparameter of SAC). Algorithm~\ref{alg:sqrl-fine} summarizes the fine-tuning process.

\vspace{-0.1em}
\subsection{Analysis}
\vspace{-0.1em}

Next, we theoretically analyze the safety of SQRL, specifically considering the safety of the learning process itself. 
For this analysis (but \emph{not} in our experiments), we make the following assumptions:
\begin{assumption}
The safety-critic $\hat{Q}_\text{safe}^{\bar{\pi}}$ is optimal such that, after pre-training, can estimate the true expected future failure probability given by ${Q}_\text{safe}^{\bar{\pi}}$ in Equation~\ref{eq:failure-prob}, for any experienced state-action pair.
\label{assump1-s5}
\end{assumption}
\begin{assumption}
The transition dynamics leading into failure states have a transition probability at least $\epsilon$. That is, for all unsafe states $s' \in \gS_\text{unsafe}$, $P(s'|s, a) > \epsilon$ or $P(s'|s, a)=0$  $\forall (s, a)$.
\label{assump2-s5}
\end{assumption}
\begin{assumption}
The support of the pre-training data for $\qsafe^\pi$ covers the states and actions observed during fine-tuning.
\label{assump3-s5}
\end{assumption}
\begin{assumption}
There always exists a ``safe'' action $a$ at every safe state $s \not\in \gS_\text{unsafe}$ that leads to another safe state (i.e. $\exists a, s' : \gI(s') = 0 \; \text{s.t.} \; P(s' | s,a) > 0$ if $\gI(s) = 0$).
\label{assump4-s5}
\end{assumption}

\begin{lemma}
For any policy $\bar{\pi} \in \Pi_\text{safe}^\epsilon$, the discounted probability of it failing in the future, given by $\E_{s_{t'} \sim \rho_{\bar{\pi}}|s_t}(\gsafe^t \gI(s_{t'}) \mid t' > t)$, is less than or equal to $\esafe$, given a safety-critic $\qsafe^{\bar{\pi}}$.
\label{lem:1}
\end{lemma}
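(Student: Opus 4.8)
The claim is a statement about the safety-critic $\qsafe^{\bar\pi}$ evaluated along trajectories of a policy $\bar\pi \in \Pi_\text{safe}^\epsilon$. The plan is to first unwind the definition of $\qsafe^{\bar\pi}$ from Equation~\ref{eq:failure-prob} and observe that, for a state $s_t$ with $\gI(s_t)=0$, the quantity $\qsafe^{\bar\pi}(s_t,a_t)$ is \emph{exactly} the discounted future failure probability $\E_{s_{t'}\sim\rho_{\bar\pi}\mid s_t,a_t}\big[\gsafe^{t'-t}\gI(s_{t'}) \mid t'>t\big]$ (a geometric-style sum; because failure states are terminal, only the first hitting of $\sunsafe$ contributes, but we do not even need that refinement — the displayed expectation in the lemma is literally the object that $\qsafe^{\bar\pi}$ is defined to estimate). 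Using Assumption~\ref{assump1-s5}, the learned critic equals this true value on any experienced pair, and Assumption~\ref{assump3-s5} guarantees the pairs visited during fine-tuning are covered, so we may freely substitute $\hat Q_\text{safe}^{\bar\pi}=Q_\text{safe}^{\bar\pi}$ throughout.

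\textbf{Main step.} The heart of the argument is the definition of $\Pi_\text{safe}^\epsilon$: a policy $\bar\pi$ lies in this set precisely when it assigns zero probability to any action $a$ at state $s$ with $\hat Q_\text{safe}(s,a)\ge\epsilon$; equivalently, every action in the support of $\bar\pi(\cdot\mid s)$ satisfies $\hat Q_\text{safe}(s,a)<\epsilon = \esafe$. Fix an initial safe state $s_t$ and an action $a_t\sim\bar\pi(\cdot\mid s_t)$. By the support condition, $\hat Q_\text{safe}^{\bar\pi}(s_t,a_t)<\esafe$, and by the identification above this is exactly $\E_{s_{t'}\sim\rho_{\bar\pi}\mid s_t,a_t}\big[\gsafe^{t'-t}\gI(s_{t'})\mid t'>t\big]$. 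Taking the expectation over $a_t\sim\bar\pi(\cdot\mid s_t)$ (a convex combination of quantities each $<\esafe$) preserves the bound, giving $\E_{s_{t'}\sim\rho_{\bar\pi}\mid s_t}\big[\gsafe^{t'-t}\gI(s_{t'})\mid t'>t\big]\le\esafe$, which is the assertion.

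\textbf{Where the care is needed.} The only subtlety — and the step I expect to be the main obstacle in writing it up cleanly — is the bookkeeping that lets us pass from the one-step critic value to the full discounted return along the policy's own trajectory distribution, i.e.\ verifying that the recursive Bellman definition of $\qsafe^{\bar\pi}$ really does unroll to the flat expectation $\E_{s_{t'}\sim\rho_{\bar\pi}\mid s_t,a_t}[\gsafe^{t'-t}\gI(s_{t'})\mid t'>t]$ when every subsequent action is drawn from $\bar\pi$. This requires noting that the policy $\bar\pi$ used in the rollout is the same policy under which $\qsafe^{\bar\pi}$ is defined (the self-consistent fixed point discussed in the pre-training section), so the two expectations are over the same trajectory measure; one then either invokes Equation~\ref{eq:failure-prob} directly (it is already written in unrolled form) or argues by induction on the horizon $T$ using the Bellman equation, handling the terminal-state indicator $(1-\gI(s))$ factor. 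Assumptions~\ref{assump2-s5} and~\ref{assump4-s5} are not needed for this particular lemma (they support the downstream feasibility results), so I would not invoke them here. The remaining manipulations — linearity of expectation, the convex-combination argument over the action distribution — are routine.
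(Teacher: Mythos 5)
Your argument is correct, and its key idea---the masked policy's support only contains actions with $\qsafe^{\bar{\pi}}(s,a) < \esafe$, and by Assumption 1 that critic value \emph{is} the discounted future failure probability---is the same one driving the paper's proof. Where you differ is in the route and in the assumptions invoked: the paper writes the argument as a base case plus ``recursion'' (and, in the appendix version, as a proof by contradiction on the unrolled expectation), and it leans on Assumptions 2 and 4 (the $\epsilon$-lower-bounded transitions into $\sunsafe$ and the existence of a safe action at every safe state), whereas you simply substitute the definitional identity $Q_\text{safe}^{\bar{\pi}}(s_t,a_t)=\E\big[\sum_{t'>t}\gsafe^{t'-t}\gI(s_{t'})\big]$ and average over $a_t\sim\bar{\pi}(\cdot\mid s_t)$. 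For the lemma as literally stated---a policy $\bar{\pi}\in\Pi_\text{safe}^\epsilon$ evaluated against \emph{its own} critic $\qsafe^{\bar{\pi}}$---your direct convexity argument is sound and cleaner, and you are right that Assumptions 2 and 4 are not needed (they guarantee non-vacuity of $\Pi_\text{safe}^\epsilon$ and feed the downstream theorem, not this implication). The one caveat, which you correctly flagged as the delicate point, is the self-consistency of critic and rollout policy: in the algorithm (and in the appendix's general theorem) the critic is $\qsafe^{\bar{\pi}^*}$ of the pre-trained constrained policy and is used to mask \emph{different} policies encountered during fine-tuning, in which case $\qsafe^{\bar{\pi}^*}(s,a)$ is no longer literally the deployed policy's failure probability and the one-step substitution you rely on breaks; that mismatch is precisely why the paper resorts to the per-step recursion/contradiction argument with Assumption 2. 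So your proof buys simplicity under the self-consistent reading suggested by the lemma's notation, while the paper's heavier machinery is what it uses to stretch the conclusion to all policy iterates in Theorem 1.
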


% Add proof to the Lemma 1
Assuming that $\hat{Q}_\text{safe}^{\bar{\pi}} = Q_\text{safe}^{\bar{\pi}}$ (Assumption 1), for the base case we can assume for a policy $\bar{\pi}$ starting in state $s_0 \in \mathcal{S}_\text{safe},$ there is always an action $a_0$ for which $\pi(a_0|s_0) > 0$ and $\hat{Q}_\text{safe}(s_0,a_0) < \esafe$:
% then following from the definition of $\bar{\pi}$ and $Q_\text{safe}^*$.:
% \vspace{-.15cm}
\begin{equation*}
    \hat{Q}_\text{safe}^\pi(s_0, a_0) = \E_{\bar{\pi}}
    \left[\gsafe^t \mathcal{I}(s_t) | t \geq 0\right] = \Pr(\mathcal{I}(s_0) = 1) + \E_{\pi}\left[\gsafe^t \mathcal{I}(s_t) | t > 0\right] < \esafe.
    % \vspace{-.25cm}
\end{equation*}
From recursion, it follows that when the step probability of failing is below $\esafe,$ if the policy $\bar{\pi}$ masks the actions sampled by $\pi$ to those that are within the safety threshold, $\hat{Q}_\text{safe}^{\bar{\pi}}(s_t, a_t) < \esafe$, $a_t \sim \bar{\pi}_\epsilon(s_t)$ has failure probability less than $\hat{Q}_\text{safe}^{\bar{\pi}}(s, a).$  This is because there are actions in the support of the expectation of $\hat{Q}_\text{safe}^{\bar{\pi}}(s, a)$ where $\Pr(\mathcal{I}(s_t) = 1) < \esafe,$ or else that state's cumulative failure probability would be $> \epsilon.$ From Assumptions 2 and 3, this means that these actions can be sampled by $\bar{\pi}$, and from the definition of the masking operator $\Gamma_\text{safe}^\epsilon$, actions that would violate the constraint are masked out. A visual representation of this concept is presented in Fig.~\ref{fig:safety_mask} in Appendix~\ref{appendix:A}.
% Lemma 1, say something stronger, that all of the policies trained will be safe
\begin{theorem}
Optimizing the policy learning objective in Eq.~\ref{eq:pol-ft}, given from Assumptions 1-4, all new policies $\pi'$ encountered during training will be in $\Pi_\text{safe}^\epsilon$ when trained on $\gtask$.
\label{thm:1}
\end{theorem}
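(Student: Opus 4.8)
The plan is to show that the fine-tuning procedure maintains the invariant that every policy iterate lies in $\Pi_\text{safe}^\epsilon$, by combining the explicit masking operator $\Gamma^\epsilon_\text{safe}$ applied during data collection with the safety of the constraint threshold $\esafe$ guaranteed by Lemma~\ref{lem:1}. First I would observe that the policy that actually acts in the environment during fine-tuning is never the raw $\pi'_\theta$ but rather its projection $\bar{\pi}'_\theta = \Gamma^\epsilon_\text{safe}(\pi'_\theta)$ (see Algorithm~\ref{alg:sqrl-fine}, line where $a \sim \bar{\pi}_\theta(\cdot\mid s)$); by the definition of $\Gamma^\epsilon_\text{safe}$ in Eq.~\ref{eq:safe-pi}, this projected policy places zero mass on any action $a$ with $\hat{Q}_\text{safe}(s,a) \ge \epsilon$, so $\bar{\pi}'_\theta \in \Pi_\text{safe}^\epsilon$ by construction, for every $\theta$ encountered. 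Thus the claim ``all new policies $\pi'$ encountered during training will be in $\Pi_\text{safe}^\epsilon$'' should be read as a claim about the executed (projected) policies, and the projection step makes membership automatic provided the masking is well-defined, i.e.\ provided there is always at least one admissible action to sample.

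The key steps, in order, are: (1) invoke Assumption~\ref{assump1-s5} to identify $\hat{Q}_\text{safe}^{\bar\pi}$ with the true future-failure probability $Q_\text{safe}^{\bar\pi}$ of Eq.~\ref{eq:failure-prob}, so the masking threshold has its intended semantic meaning; (2) use Assumption~\ref{assump4-s5} together with Assumptions~\ref{assump2-s5}--\ref{assump3-s5} to guarantee that at every reachable safe state there exists an action $a$ with $\hat{Q}_\text{safe}(s,a) < \esafe$ in the support of $\pi'_\theta$ — this is exactly the base-case argument already carried out in the proof of Lemma~\ref{lem:1}, namely that a genuinely safe continuation exists and the coverage assumption ensures the critic estimate is accurate there — so the normalization in Eq.~\ref{eq:safe-pi} is over a nonempty set and $\bar{\pi}'_\theta$ is a well-defined stochastic policy; (3) conclude that $\bar{\pi}'_\theta \in \Pi_\text{safe}^\epsilon$ directly from the masking definition; (4) apply Lemma~\ref{lem:1} to $\bar{\pi}'_\theta$ to conclude the discounted future-failure probability of each iterate is at most $\esafe$, which is the safety guarantee we want throughout training, not merely at convergence. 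Finally, I would note that the role of the penalty term $\nu(\esafe - \hat{Q}_\text{safe}(s,a'))$ in Eq.~\ref{eq:pol-ft} is not to \emph{enforce} the invariant (that is done by $\Gamma^\epsilon_\text{safe}$) but to push the \emph{unconstrained} $\pi'_\theta$ toward the feasible set so that the projection distorts it as little as possible and the support condition in step (2) is maintained as the dynamics shift from $P_\text{pre}$ to $P_\text{target}$; this is where Assumption~\ref{assump3-s5} (the pre-training data covers states/actions seen during fine-tuning) is essential, since otherwise $\hat{Q}_\text{safe}$ could be arbitrarily wrong on newly visited states.

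The main obstacle I anticipate is step (2): ensuring that the masked action set is nonempty at \emph{every} state actually visited during fine-tuning on $\gtask$, even though $\hat{Q}_\text{safe}$ was trained on $\ptask$ under different dynamics $P_\text{pre}$ and a constrained policy mixture. The argument has to lean on Assumption~\ref{assump3-s5} to transfer the accuracy of the critic, on Assumption~\ref{assump4-s5} to guarantee a safe action exists in the true MDP, and on Assumption~\ref{assump2-s5} to rule out the pathological case where failure is reached via a vanishingly small transition probability that the discounted critic underestimates. Stitching these together into a clean inductive statement — reachable state $\Rightarrow$ admissible action exists $\Rightarrow$ next reachable state is again safe — is the crux; the rest is bookkeeping on top of Lemma~\ref{lem:1}. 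I would present the proof as a short induction on the timestep within an episode, with the inductive step being precisely the recursion already sketched after Lemma~\ref{lem:1}, and with the executed-policy-is-the-projection observation doing the conceptual heavy lifting.
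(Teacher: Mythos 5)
Your proposal is correct and follows essentially the same route as the paper: the appendix establishes the failure-probability bound for any $\qsafe$-masked policy (the contradiction argument there is just another packaging of the recursion behind Lemma~\ref{lem:1}, with Assumptions~\ref{assump2-s5}--\ref{assump4-s5} supplying critic accuracy and a nonempty admissible action set), and the accompanying corollary makes exactly your central observation that every SQRL iterate executed during fine-tuning is by construction the projection $\Gamma^\epsilon_\text{safe}(\pi'_\theta)$, hence in $\Pi_\text{safe}^\epsilon$, with the Lagrangian term in Eq.~\ref{eq:pol-ft} playing only the auxiliary role you describe. No gap relative to the paper's own argument.
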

Following from Lemma~\ref{lem:1}, we can show that actions sampled according to the projected safety policy $\bar{\pi}'$ match the constraints that $Q_\text{safe}^* < \esafe$ and therefore makes the problem equivalent to solving the CMDP $(\mathcal{S}, \mathcal{A}, P_\text{target}, \mu, r_\text{target}, \qsafe)$, where the constraints are learned in the pre-training phase. A more detailed proof is included in the supplemental material.

\section{Practical Implementation of SQRL}

To train a safety-critic and policy with SQRL on high-dimensional control problems, we need to make several approximations in our implementation of the algorithm which in practice do not harm overall performance. Our method is implemented as a layer above existing off-policy algorithms which collect and store data in a replay buffer that is used throughout training for performing updates. For our method, we store trajectories in an offline replay buffer, $\gD_\text{offline},$ when collecting samples for training the actor and critic, and additionally keep a smaller, ``on-policy'' replay buffer $\gD_\text{safe}$, which is used to store $k$ trajectories samples from the latest policy to train the safety-critic, after training the policy offline for $n_\text{off}$ steps. During pre-training, we use rejection sampling to find actions with a failure probability that are only just below the threshold $\esafe$, scored using the current safety-critic.
%safe actions with a failure probability below the threshold $\esafe$ are rejection-sampled, scored using the current safety-critic. 
This equates to sampling safe but `risky' actions that encourage the agent to explore the safety boundary to correctly identify between safe and unsafe actions for a given state. Alternatively, methods like cross entropy method which perform importance-based sampling weighted by the safety-critic failure probability could also be used.

Sampling actions in this way affects the algorithm by ensuring the data being collected by our method during policy evaluation gives the safety-critic the most information about how well it determines safe and unsafe actions close to the threshold of $\esafe$. Since the behavior of the final safety policy is largely determined by the safety hyperparameters $\esafe$ and $\gsafe$, which are task and environment-specific, one additional step during pre-training is tuning both of these parameters to reach optimal pre-training performance, since it is unsafe to tune them while attempting to solve the target task.

In the fine-tuning phase, the safe sampling strategy translates into selecting safe actions according to $\qsafe^{\bar{\pi}}$ by, once again, sampling $k$ actions, masking out those that are unsafe, and then importance sampling the remaining options with probability proportional to their log probability under the original distribution output by $\pi(\cdot |s)$. In the event that no safe action is found, the safest action with lowest probability of failing is chosen. Qualitatively in our experiments, these are typically actions that stabilize the robot to avoid failure, and restrict the ability to accumulate reward for the remainder of the episode.
In our experiments in Section~\ref{sec:experiments}, we show ablation results that confirm the benefit of the exploration induced by MaxEnt RL to improve the quality of safety exploration during pre-training, yielding a better quality estimate of failure probabilities when the goal task is attempted.

\iffalse
First, we define a safety policy $\bar{\pi}_\epsilon$ as a policy whose actions where $\qsafe(s,a) \geq \epsilon$ have probability 0. Let the set of all policies where this holds be $\Pi_\text{safe}^\epsilon$. Next, we define the projection $\Gamma^\epsilon_\text{safe} : \pi \rightarrow \bar{\pi}_\epsilon, \bar{\pi}_\epsilon \in \Pi_\text{safe}$ that maps any policy onto this set. Let us consider a natural projection operator, using our safety-critic definition, that does the following: 
\vspace{-.1cm}
\begin{equation}
    \bar{\pi}(a \mid s) \propto \begin{cases} \pi(a \mid s) & \text{if } Q_\text{safe}(s, a) \le \epsilon \\ 0 & \text{otherwise} \end{cases}.
    \vspace{-.1cm}
\end{equation}
By masking the output distribution over actions from $\pi$ to only sampling actions where this safety condition is met, the policy $\bar{\pi}$ is ensured to be safe, where $\qsafe^\pi$ plays a similar role as the safety-layer used in~\cite{dalal2018safe}. 
\fi

%\vspace{-0.5em}
\section{Experiments}
%\vspace{-0.5em}
\label{sec:experiments}

In our experimental evaluation, we aim to answer the following questions {to evaluate how well our method transfers in this task-agnostic setting}: (1) Does our approach enable substantially safer transfer learning compared to RL without a safety-critic? (2) How does our approach compare to prior safe RL methods? (3) How does SQRL affect the performance, stability, and speed of learning? (4) Does tuning the safety threshold $\esafe$ allow the user to trade-off risk and performance?

\textbf{Environments.} 
% TODO: ADD % of total reward side by side with failure rate
\begin{figure*}
  \begin{minipage}{.565\textwidth}
  \centering
  \begin{subfigure}[b]{.99\linewidth}
  \centering
      \includegraphics[width=0.99\linewidth]{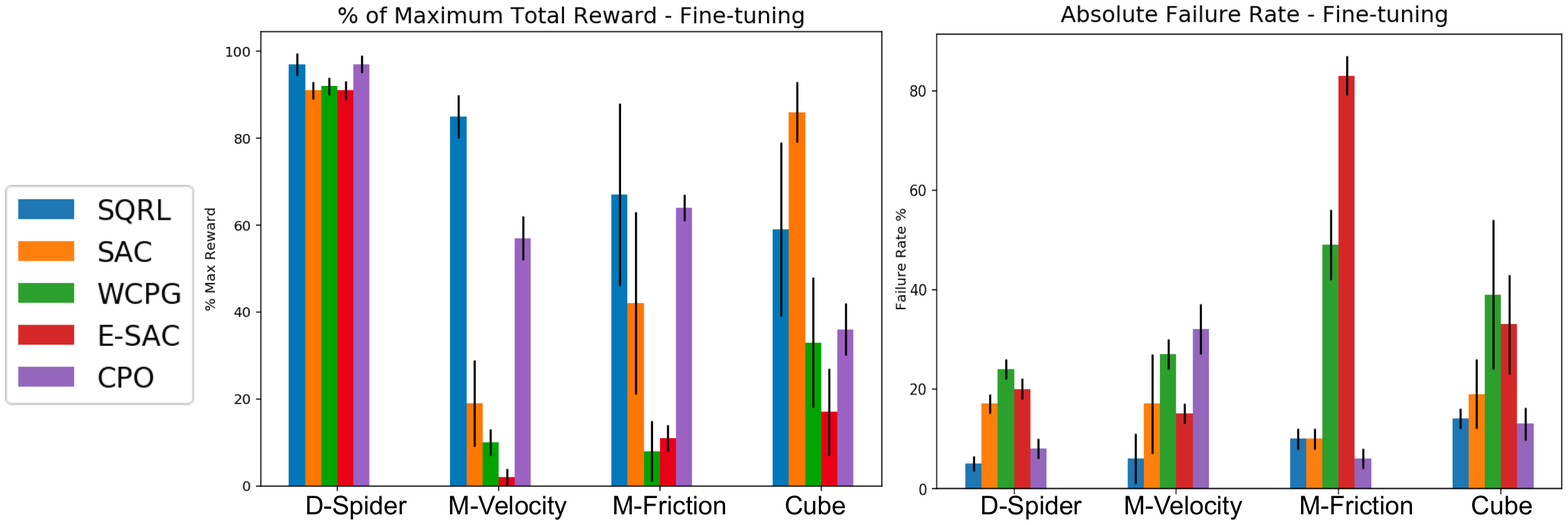}
      \label{fig:bar-plot}
      \vspace{-0.5cm}
  \end{subfigure}
  \caption{\small Final task performance and cumulative failure rate during fine-tuning. SQRL achieves good performance while also being significantly safer during learning. }
  \end{minipage}% \hfill
  \hfill
  \begin{minipage}{.42\textwidth}%[t][][b]{.65\textwidth}
  \begin{subfigure}[b]{.99\linewidth}
    \includegraphics[width=1.0\linewidth]{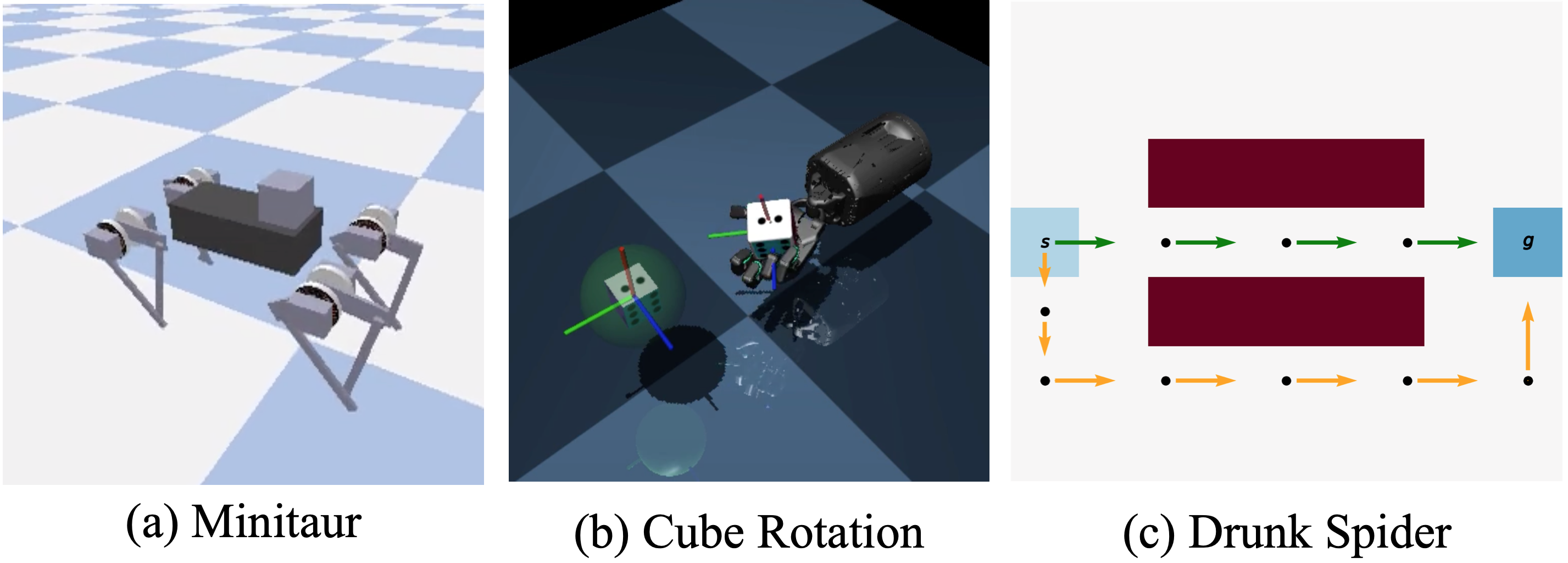}
    \vspace{-0.5cm}
  \end{subfigure}
  \caption{\small Environments used in the paper}
    % \begin{subfigure}[b]{.95\linewidth}
    %     \centering
    %     \includegraphics[height=2cm]{Figures/minitaur-1.png}
    %     \caption{\small Minitaur}
    %     \label{fig:mini-env}
    % \end{subfigure}
    % \begin{subfigure}[b]{.45\linewidth}
    %     \centering
    %     \includegraphics[height=2cm]{Figures/envs.pdf}
    %     \caption{\small Cube Rotation}
    %     \label{fig:cube-env}
    % \end{subfigure}
    % \centering
    % \begin{subfigure}[b]{.45\linewidth}
    %     \centering
    %     \includegraphics[height=2cm]{Figures/drunkspider.png}
    %     \caption{\small Drunk Spider}
    %     \label{fig:drunk-env}
    % \end{subfigure}
    % \vspace{-1em}
    %\vspace{-0.3cm}
    \label{fig:envs}
  \end{minipage} %\quad
\end{figure*}
To answer these questions, we design our experiments in three distinct environments, shown in Figure~\ref{fig:envs}. Inspired by~\citet{kappen2005path}, the \textbf{drunk spider} domain is a 2D navigation environment, where the agent must learn to navigate to a fixed goal position, with stochastic dynamics corresponding to action noise. There is a narrow bridge that goes directly from start to goal, with two lava pits on either side. Falling into either pit corresponds to a failure and episode termination. If the agent is unconfident about crossing the bridge without falling, it can instead choose to go around the bridge and lava pits entirely, which takes longer (\textit{i.e.}, costs more) but avoids the risk of falling. Our second environment is the \textbf{minitaur} quadruped locomotion environment from~\citet{tan2018sim}. Here, we consider the task of running at faster speeds, i.e. 0.4 m/s, after pre-training on a safer desired velocity of 0.3 m/s, with the goal of learning to run without falling. We also include a variant of this environment that increases the foot friction of the simulator during fine-tuning, showing that our method works even when there are changes in the environment dynamics.
Our last environment corresponds to \textbf{cube rotation} with a five-fingered 24 DoF ShadowHand using the environment introduced by~\citet{nagabandi2019deep}. The goal is to rotate a cube from a random initial orientation to a desired goal orientation, where the fine-tuning task requires a larger orientation change than the pre-training phase. An unsafe state corresponds to the block falling off the hand.
See Appendix~\ref{appendix:env} for details.

\subsection{The Performance and Safety of Learning}

To answer question (1), we perform an ablation of our approach where no safety-critic training is performed. This comparison directly corresponds to training standard SAC on the pre-training task, and fine-tuning the pre-trained policy on the target task, {shown in Figure~\ref{fig:safety_stability}}.
This directly compares the role the safety-critic plays in both safety and performance. To address question (2), we compare SQRL to four prior approaches for safe reinforcement learning: worst cases policy gradients (WCPG)~\cite{tang2019worst}, which learns a distribution over Q-values and optimizes predicted worst case performance, ensemble SAC (ESAC), which uses an ensemble of critics to produce a conservative, risk-averse estimate of the Q-value during learning~\cite{eysenbach2017leave}, constrained policy optimization (CPO), which adds a constraint-based penalty to the policy objective~\cite{achiam2017constrained}, and Risk-Sensitive Q-Learning (RS), which adds the risk estimate directly to the reward objective~\cite{geibel2005risk}. To compare to~\cite{geibel2005risk}, we re-use the pre-trained SQRL policy and penalize the critic loss with the safety-critic risk estimate scaled by a Lagrange multiplier. For all prior methods, we perform the same pre-training and fine-tuning procedure as our method to provide a fair comparison.

We compare all of the methods in fine-tuning performance in Figure~\ref{fig:safety_stability}. Overall, we observe that SQRL is substantially safer, with fewer safety constraint violations, than both SAC and WCPG, while transferring with better final performance in the target task. In the drunk spider environment, all approaches can acquire a high performing policy; however, only SQRL enables safety during the learning process, with only a 1\% fall rate.
In the Minitaur environment, SQRL enables the robot to walk at a faster speed, while falling around only 5\% of the time during learning, while the unconstrained SAC policy fell about 3x as often. Finally, on the challenging cube rotation task, SQRL achieves good performance while dropping the cube the least of all approaches, although SAC performs better with more safety incidents.

%In particular, on the Minitaur locomotion task, our method succeeds at enabling the robot to walk at a faster speed without falling as often as SAC or WCPG.

% \todo{describe results here, referring to bar plots}

\begin{figure*}
\centering
    %\begin{subfigure}[b]{.32\linewidth}
      %  \includegraphics[width=0.32\linewidth]{Figures/minivel-finetune.png}
        % \caption{Total failure rate during fine-tuning}
       % \label{fig:min-vel-ret-ft}
    %\end{subfigure}
    %\begin{subfigure}[b]{.32\linewidth}
      %  \includegraphics[width=0.32\linewidth]{Figures/minifric-finetune.png}
        % \caption{Total failure rate during fine-tuning}
     %   \label{fig:min-vel-ret-ft}
    %\end{subfigure}
    %\begin{subfigure}[b]{.32\linewidth}
       % \includegraphics[width=0.32\linewidth]{Figures/cuberotate-finetune.png}
        % \caption{Total failure rate during fine-tuning}
      %  \label{fig:min-vel-ret-ft}
    %\end{subfigure}
    \includegraphics[width=0.99\linewidth]{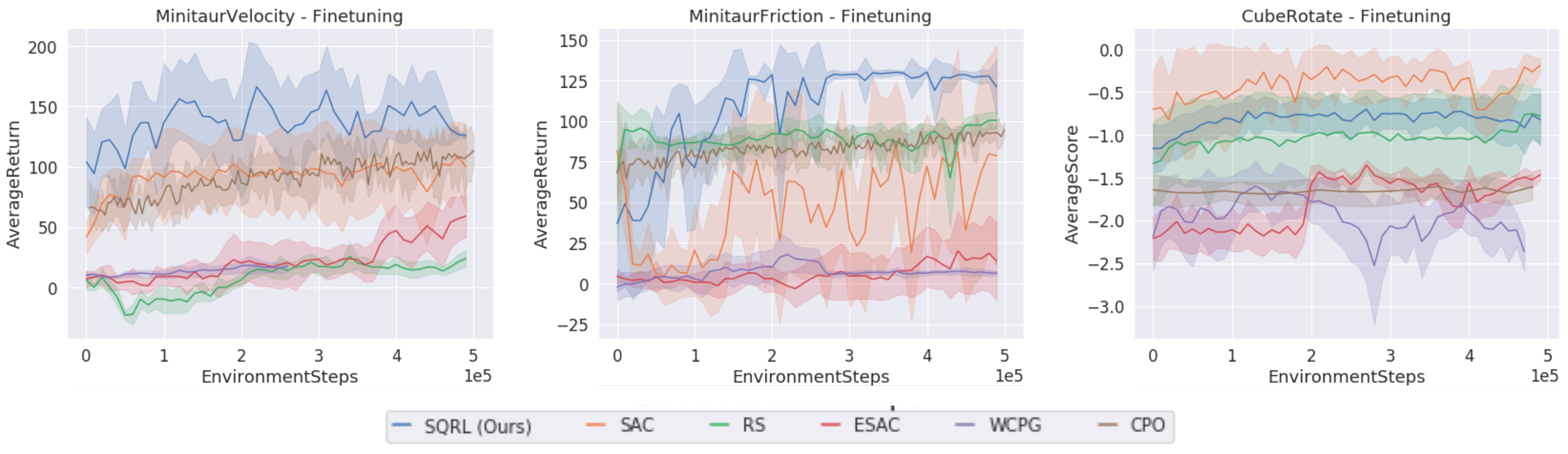}
    \vspace{-0.1cm}
    \caption{Fine-tuning curves for our method on the Minitaur and Cube target tasks, suggesting that, beyond increasing safety, SQRL's safety-critic leads to more stable and efficient learning.}
    \label{fig:safety_stability}
    \vspace{-1em}
\end{figure*}

\subsection{The Effect of Safety on Learning}

\begin{wrapfigure}{r}{0.5\textwidth}
    \vspace{-2em}
    \centering
    \includegraphics[width=1\linewidth]{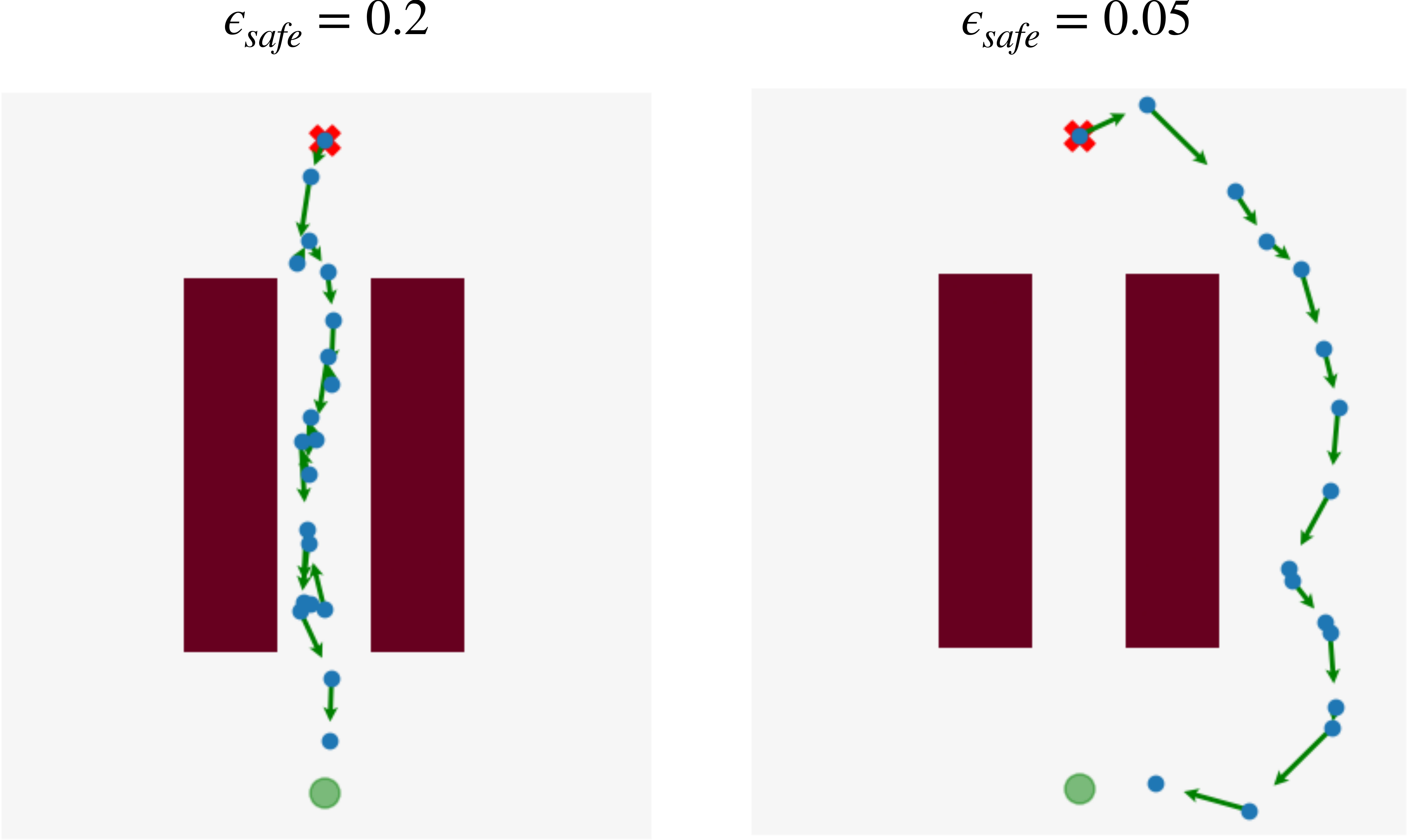}
    \vspace{-0.2cm}
    \caption{\small Qualitative results showing different trajectories corresponding to different target safety thresholds.}% By tuning the safety threshold $\esafe$, we can control the amount of risk taken by the agent.}
    \vspace{-0.2cm}
    \label{fig:drunkspider}
\end{wrapfigure}

To answer question (3), we now aim to assess the stability and efficiency of learning when using the safety-critic. To do so, we present learning curves for SQRL and our comparisons in Figures~\ref{fig:safety_stability} and~\ref{fig:cumulative_failures} (in Appendix~\ref{appendix:training-plots}). In these curves, we find that SQRL not only reduces the chances of safety incidents (Figure~\ref{fig:cumulative_failures}), but also significantly speeds up and stabilizes the learning process (Figure~\ref{fig:safety_stability}). In particular, we observe large performance drops during SAC training, primarily due to exploration of new policies, some of which are unsafe and lead to early termination and low reward. Our approach instead mitigates these issues by exploring largely in the space of safe behaviors, producing more stable and efficient learning. We also show that when fine-tuning for the cube rotation task, the safety policy can be overly cautious (see Figures~\ref{fig:safety_stability} and~\ref{fig:cumulative_failures} in the appendix), resulting in fewer failures but lower performance than the unconstrained SAC policy. These results indicate that even when the fine-tuning task is a more difficult version of the original training task, a task-agnostic safety Q-function can guide exploration to be safer while learning to solve the new task.

\subsection{Trading Off Risk and Reward with $\esafe$}

% \begin{figure}
%     \vspace{-0.2cm}
%     \centering
%     \includegraphics[width=0.8\linewidth]{Figures/safe-vs-risky.pdf}
%      \vspace{-0.2cm}
%     \caption{Qualitative results showing different trajectories corresponding to different target safety thresholds. By tuning the safety threshold $\esafe$, we can control the amount of risk taken by the agent.}
%     \label{fig:drunkspider}
%     % \vspace{-2em}
% \end{figure}

\iffalse
\begin{figure}
    \centering
    \includegraphics[width=0.65\linewidth]{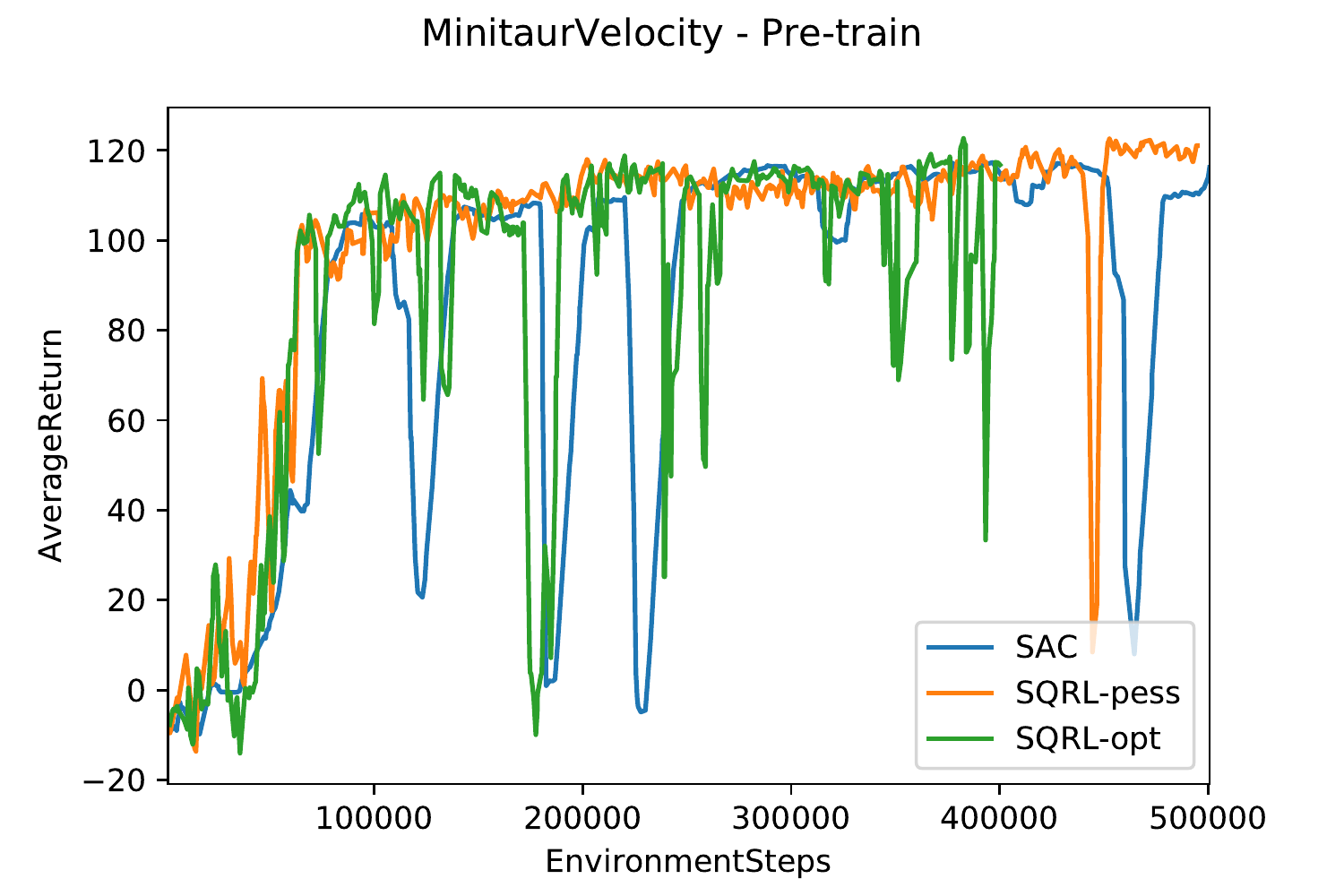}
    \caption{Pre-training learning curves for the MinitaurVelocity and Cube environments that show stable learning during pre-training for an optimistic ($\esafe=0.15$) and pessimistic ($\esafe=0.05$) safety-critic, compared with SAC.}
    \label{fig:learning-curves}
\end{figure}
\fi

% \begin{figure}
%     \begin{subfigure}[b]{.49\linewidth}
%         \includegraphics[width=\linewidth, height=2.5cm]{Figures/num-resets.pdf} % cum-fail-mini.pdf
%         % \caption{Final / maximum possible reward}
%         \caption{Cumulative failures during fine-tuning}
%         \label{fig:min-vel-num-resets}
%     \end{subfigure}
%     \begin{subfigure}[b]{.49\linewidth}
%         \includegraphics[width=\linewidth, height=2.5cm]{Figures/min-vel-ret-ft.pdf}
%         \caption{Fine-tuning learning curves}
%         \label{fig:min-vel-ret-ft}
%     \end{subfigure}
%     \caption{Learning and cumulative failure curves for pre-training on Minitaur Velocity task.}
%     \label{fig:mini-curves}

% \end{figure}

%\todo{this subsection needs to be revised to discuss drunk spider env}

Finally, we aim to answer question (4). We aim to evaluate whether the safety threshold, $\esafe$, can be used to control the amount of risk the agent is willing to take. To do so, we consider the drunk spider environment, evaluating two different values of our safety threshold, $\esafe=0.2$ and $\esafe=0.05$, with action noise included in the environment with magnitude $\varepsilon_\text{action}=0.2$. In Figure~\ref{fig:drunkspider}, we observe that, when using the conservative safety threshold of $\esafe=0.05$, the agent chooses to cautiously move around the two pits of lava, taking the long but very safe path to the goal location in green. Further, when we increase the safety threshold to a maximum failure probability of $\esafe=0.2$, we find that the agent is willing to take a risk by navigating directly through the two pits of lava, despite the action noise persistent in the environment. We observe that these two behaviors are consistent across multiple trials. Overall, this experiment suggests that tuning the safety threshold $\esafe$ does provide a means to control the amount of risk the agent is willing to take, which in turn corresponds to the value of the task reward and efficiency relative to the (negative) value of failure. This knob is important for practical applications where failures have varying costs.

\section{Discussion}
%\vspace{-0.5em}
\label{sec:conclusion}
%%CF.2.2: Here's how I might suggest writing the conclusion:
We introduced an approach for learning to be safe and then using the learned safety precautions to act safely while learning new tasks. Our approach trains a safety-critic to estimate the probability of failure in the future from a given state and action, which we can learn with dynamic programming. We impose safety by constraining the actions of future policies to limit the probability of failures. Our approach naturally yields guarantees of safety under standard assumptions, and can be combined in practice with expressive deep reinforcement learning methods. On a series of challenging control problems, our approach encounters substantially fewer failures during the learning process than prior methods, and even produces superior task performance on one task.

Our experimental results make strides towards increasing the safety of learning, compared to multiple prior methods.
However, there are a number of potential avenues for future work, such as optimizing the pre-training or curriculum generation procedures.
Relaxing some of the assumptions to guarantee that a constrained policy is safe may yield alternative ways of calculating risk and sampling safe actions under the Q-learning framework we propose.
Motivated by our analysis, another area for study is handling out-of-distribution queries to the safety-critic. 
% more exploratory pre-training, a finer curriculum, or motivated by our analysis, incorporating general constraints on state-action pairs in which the safety-critic is queried to prevent out-of-distribution queries. 
Finally, encouraged by the strong results in simulation, we aim to apply SQRL to real-world manipulation tasks in future work.

\bibliographystyle{plainnat}
\bibliography{references.bib}

\appendix
% \newpage

\newpage
\onecolumn
\appendix
\begin{LARGE}
\begin{center}
\textbf{Learning to be Safe: Deep RL with a Safety-Critic Supplementary Material}
\end{center}
\end{LARGE}
\maketitle
\setcounter{theorem}{0}
\setcounter{assumption}{0}

\section{Guaranteeing Safety with the Safety Critic}
\label{appendix:A}
In this section we will prove that constraining \emph{any} policy using a safety Q-function guarantees safety. Theorem~\ref{thm:1} from the main text is a special case of this result. We first clarify our notation and assumptions before stating and proving our result. Our proof will consist of the safety Q-function $\qsafe$ for a safety indicator function $\gI$, along with a safety threshold $\epsilon > 0$. 

For an arbitrary policy $\pi$, define $\bar{\pi}_{\gI,\epsilon}$ (sometimes written as $\bar{\pi}$) as the policy obtained by ``masking'' $\pi$ with $\qsafe^{\bar{\pi}}$:
\begin{equation}
    \bar{\pi}_{\gI, \epsilon}(a \mid s) \triangleq \frac{1}{Z(s)} \mathbbm{1}(\qsafe(s, a) < \epsilon) \cdot \pi(a \mid s),
    \label{eq:masking}
\end{equation}
where the normalizing constant $Z(s)$ is defined as
\begin{equation}
    Z(s) \triangleq \int \mathbbm{1}(\qsafe^{\bar{\pi}}(s, a) < \epsilon) \cdot \pi(a \mid s) da.
\end{equation}
We will denote the initial, pre-trained policy as $\pi^*$ and its safety-constrained variant as $\bar{\pi}_{\gI,\epsilon}^*$. Additionally, we define the safe set as $\gS_\text{safe} \triangleq \{s : \gI(s) = 0 \}$, and the unsafe set as $\gS_\text{unsafe} \triangleq \{s : \gI(s) = 1 \}$.

We make the following simplifying assumptions:
\begin{assumption}
The safety-critic $\hat{Q}_\text{safe}^{\bar{\pi}}$ is optimal such that, after pre-training, it can estimate the true expected future failure probability given by ${Q}_\text{safe}^{\bar{\pi}}$ in Equation~\ref{eq:failure-prob}, for any experienced state-action pair.\label{assump1-appendix}
\end{assumption}

\begin{assumption}
The transition dynamics leading into failure states have a transition probability at least $\epsilon$. That is, for all unsafe states $s' \in \gS_\text{unsafe}$, $P(s'|s, a) > \epsilon$ or $P(s'|s, a)=0$  $\forall (s, a)$.
\label{assump2}
\end{assumption}

\begin{assumption}
At every safe state $s \in \gS_\text{safe}$, there exists an action $a$ such that there is a non-zero probability of transitioning to another safe state $s' \in \gS_\text{safe}$, i.e.\ $\Pr(s'|s) > 0$. This means that at any safe state that could transition to a failure state $s'' \in \gS_\text{unsafe}$.
\label{assump3}
\end{assumption}

\begin{assumption}
The support of the pre-training data for $\qsafe$ covers all states and actions observed during fine-tuning, and the policy distribution $\pi^*$ has full support over actions.
\label{assump4}
\end{assumption}

With these assumptions in place, we formally state our general result:

\begin{theorem} \label{thm:general}
Optimizing the policy learning objective in Eq.~\ref{eq:pol-ft}, with assumptions that the safety-critic is an optimal learner and the initial policy is $\epsilon$-safe under $\ptask$, all new policies $\pi'$ encountered during training will be in $\Pi_\text{safe}^\epsilon$ when trained on $\gtask$.
% \label{lem:1}
% Let policy $\bar{\pi}_{\gI,\epsilon}^*$, safety indicator function $\gI$, and safety threshold $\epsilon$ be given. Then any policy $\bar{\pi}_{\gI, \epsilon}$ masked via $\qsafe$ has an expected safety cost of at most $\epsilon$:
\begin{equation}
    \E_{\rho_{\bar{\pi}_{\gI, \epsilon}}}\left[\sum_{t=0}^\infty \gamma^t \gI(s_t) \right] < \epsilon.
    \label{eq:thm-general}
\end{equation}
\end{theorem}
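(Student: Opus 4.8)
The plan is to show that if every policy iterate lies in $\Pi_\text{safe}^\epsilon$, then the discounted failure mass accumulated along a rollout of $\bar{\pi}_{\gI,\epsilon}$ is bounded by $\epsilon$, and then to argue by induction on the fine-tuning iterates that the constraint in Eq.~\ref{eq:pol-ft} together with the masking operator $\Gamma_\text{safe}^\epsilon$ keeps each iterate inside $\Pi_\text{safe}^\epsilon$. The first part is essentially Lemma~\ref{lem:1}: by Assumption~\ref{assump1-appendix}, $\qsafe^{\bar{\pi}} = Q_\text{safe}^{\bar{\pi}}$, and $Q_\text{safe}^{\bar{\pi}}(s,a)$ is (by Eq.~\ref{eq:failure-prob}) exactly $\E_{\rho_{\bar{\pi}}}\left[\sum_{t\ge 0}\gamma^t \gI(s_t)\mid s_0=s, a_0=a\right]$; averaging over $a_0 \sim \bar{\pi}(\cdot\mid s_0)$ and $s_0 \sim \mu_\text{target}$, and using that every action in the support of $\bar\pi$ satisfies $\qsafe^{\bar\pi}(s,a)<\epsilon$, gives the bound in Eq.~\ref{eq:thm-general}. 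So the core reduction is: it suffices to show $\bar\pi'_{\gI,\epsilon} \in \Pi_\text{safe}^\epsilon$ for every fine-tuning iterate $\pi'$, i.e.\ that the masked policy has a well-defined (nonzero normalizer) safe action distribution at every reachable state.

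The second part is where the assumptions do the work. First I would establish that $\bar\pi_{\gI,\epsilon}$ is \emph{nonvacuous}: at any safe state $s$, there is at least one action $a$ with $\qsafe^{\bar\pi}(s,a) < \epsilon$, so $Z(s) > 0$ and the masking in Eq.~\ref{eq:masking} is well-defined. This is the role of Assumptions~\ref{assump2} and~\ref{assump3}: Assumption~\ref{assump3} guarantees a ``safe'' action $a$ exists that transitions with positive probability to another safe state; Assumption~\ref{assump2} rules out vanishingly-small-probability transitions into the unsafe set, so the contribution of failure transitions to $\qsafe^{\bar\pi}(s,a)$ for that safe action is controlled — one needs to argue that along the safe continuation (recursively taking safe actions) the discounted failure probability never exceeds $\epsilon$, which is a fixed-point / contraction argument on the Bellman recursion for $Q_\text{safe}$ with discount $\gsafe$. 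Assumption~\ref{assump4} (full coverage of the fine-tuning state-action distribution by the pre-training data, plus full support of $\pi^*$) is what lets me invoke Assumption~\ref{assump1-appendix}'s optimality claim at every state encountered during fine-tuning, so the masking threshold is applied to the \emph{true} failure probabilities rather than erroneous extrapolations.

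Then I would run the induction over fine-tuning steps: the base case is that $\pi^* = \bar\pi_\text{pre}$ is $\epsilon$-safe under $\ptask$ (hypothesis), and since the masking operator is applied to every sampled action during fine-tuning (Algorithm~\ref{alg:sqrl-fine}, line where $\bar\pi_\theta \gets \Gamma(\pi_\theta)$), every action actually \emph{executed} satisfies $\qsafe^{\bar\pi}(s,a) < \epsilon$; for the inductive step, the dual-ascent update on Eq.~\ref{eq:pol-ft} changes $\pi_\theta$ but the behavior policy $\bar\pi'_{\gI,\epsilon} = \Gamma_\text{safe}^\epsilon(\pi')$ is by construction in $\Pi_\text{safe}^\epsilon$ regardless of the unconstrained iterate, provided $Z(s)>0$ everywhere, which is exactly what the nonvacuousness argument above provides. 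Combining: every behavior policy is in $\Pi_\text{safe}^\epsilon$, so Lemma~\ref{lem:1} applies to each of them, yielding Eq.~\ref{eq:thm-general}. Finally, I'd reduce the reward-optimization claim to the CMDP $(\mathcal{S},\mathcal{A},P_\text{target},\mu,r_\text{target},\qsafe)$ by noting that constraining to $\Pi_\text{safe}^\epsilon$ is precisely the CMDP feasible set, so Eq.~\ref{eq:pol-ft} is a Lagrangian for that CMDP.

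\textbf{Main obstacle.} The delicate step is the nonvacuousness / fixed-point argument showing $Z(s) > 0$ at \emph{every} reachable safe state — i.e.\ that a safe action with $\qsafe^{\bar\pi}(s,a)<\epsilon$ always exists. Assumption~\ref{assump3} only gives a \emph{single-step} safe transition; to conclude the \emph{cumulative} discounted failure probability of the safe continuation stays below $\epsilon$, one must either invoke $\gsafe$ small enough, or argue that recursively chaining safe actions yields a trajectory with zero failure probability (so $\qsafe^{\bar\pi}=0$ along it), which requires Assumption~\ref{assump3} to hold at \emph{all} states on that continuation, not just the current one. Pinning down this recursion — and checking it is consistent with the masking definition being self-referential ($\bar\pi$ appears inside its own $\qsafe^{\bar\pi}$) — is the real content; the CMDP/Lagrangian reduction at the end is essentially bookkeeping.
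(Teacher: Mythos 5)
Your proposal is correct in substance and uses the same ingredients as the paper's argument (exact critic via Assumption~\ref{assump1-appendix}, the transition lower bound of Assumption~\ref{assump2}, existence of safe actions via Assumption~\ref{assump3}, coverage via Assumption~\ref{assump4}), but it is organized differently: you reduce the theorem to Lemma~\ref{lem:1} plus a nonvacuousness claim ($Z(s)>0$ at every reachable safe state) plus an induction over fine-tuning iterates, whereas the paper proves the bound for a single masked policy $\bar{\pi}_{\gI,\epsilon}$ by contradiction and relegates the per-iterate statement to the corollary that follows Theorem~\ref{thm:general}. Concretely, the paper supposes the discounted failure probability were $\ge \epsilon$, observes that any failure must pass through some action $a$ in the support of $\bar{\pi}_{\gI,\epsilon}$ with a positive-probability transition into $\gS_\text{unsafe}$, and then invokes Assumption~\ref{assump2} (such transitions have probability greater than $\epsilon$) to conclude that the critic value of that action already exceeds $\epsilon$, contradicting the masking rule in Eq.~\ref{eq:masking}; Assumption~\ref{assump3} is used only to guarantee the mask is never empty, which is exactly your $Z(s)>0$ condition. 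This is also how the paper discharges what you flag as the main obstacle: rather than a contraction argument or a smallness condition on $\gsafe$, it uses the $\epsilon$-lower bound on failure-transition probabilities so that every single-step-risky action is visible to, and removed by, the pre-trained critic, and the recursion closes because the masked policy therefore never takes an action with positive immediate failure probability. Your remaining worries are legitimate but apply equally to the paper: Assumption~\ref{assump3} by itself does not force the guaranteed safe action to have critic value below $\epsilon$, the step $\gsafe P(s'|s,a) > \epsilon$ silently absorbs the discount factor, and the self-reference of $\qsafe^{\bar{\pi}}$ is handled only implicitly through training the critic on data from safety-constrained policies; the paper's proof is terse at precisely these points, so your proposal stops where the paper also argues informally rather than missing an idea the paper supplies rigorously. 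Your concluding induction and the CMDP/Lagrangian reduction match the paper's corollary and the main-text discussion of Theorem~\ref{thm:1}.
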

\begin{proof}
% By our definition of the safety critic, $\qsafe(s_t,a_t)$ estimates the expectation in Eq.~\ref{eq:thm-general} when following the unconstrained actor $\pi$:
% \begin{equation}
% \qsafe(s_t,a_t) = \E_\pi(\gI(s_t')| t'>t) + \Pr(\1_{I(s_t)}) = \E_\pi \left[ \sum_{t'} \gamma^{t'-t}\gI(s_{t'})\right].
% \end{equation}
% This comes from our definition of the safety-critic in Section 5.1, which was learned using the safety Bellman equation to estimate the future discounted probability of failure.

% add assumptions 1 and 4 to proof, at the beginning, and where A_unsafe is defined.

To start, recall that the safety critic $\qsafe^{\bar{\pi}^*}(s_t,a_t)$, the (optimal) Q-function for the safety indicator $\gI(s_t)$, estimates the future discounted probability of failure under the optimal safety-constrained pre-trained policy: %failure-avoiding policy:
\begin{equation}
    \qsafe^{\bar{\pi}^*}(s_t,a_t) \triangleq \E_{\bar{\pi}_{\gI,\epsilon}^*} \left[ \sum_{t'\geq t} \gamma^{t'-t}\gI(s_{t'})\right] \qquad %\text{where} \quad \pi^* \triangleq \argmin_\pi J(\pi).
    \label{eq:qsafe-def}
\end{equation}
At a high level, we will show, by contradiction, that if we are masking actions for $\bar{\pi}_{\gI, \epsilon}$ using this safety-critic, it cannot have a failure probability $\ge \eps$, since this would imply there is a state, action pair $(s_t, a_t)$, where $\qsafe^{\bar{\pi}^*}(s_t,a_t) \ge \epsilon$, and $\bar{\pi}_{\gI, \epsilon}(a_t | s_t) > 0$, breaking our construction of a $\qsafe$-masked policy in Eq.~\ref{eq:masking}. 

First, we expand the equation for the failure probability for one time-step (letting $\rho_{\bar{\pi}}$ be the induced state distribution by following $\bar{\pi}_{\gI, \epsilon}$) to show that if the failure probability is larger than $\eps$, this yields the contradiction.
\begin{align}
    \E_{s_t, \bar{\pi}_{\gI,\epsilon}} \left[\sum_{t'=t}^\infty \gamma^t \gI(s_t')\right] 
    &= \mathop{\E}_{s_t, \bar{\pi}_{\gI, \epsilon}} \left[
        \sum_{a_t \in \gA}
            \bar{\pi}_{\gI, \epsilon}(a_t|s_t) \underbrace{\E_{s_t' \sim P(s_t, a_t)}
                [\gamma \qsafe^{\bar{\pi}_{\gI, \epsilon}}(s_{t'}, a_{t'})]}_{(a)}
    \right] \stackrel{?}{\ge} \epsilon
    \label{eq:expanded-fp}
\end{align}
For the expected failure probability to exceed $\epsilon$ at some point $t'$, there must exist an action $a_{t'}$ in the support of $\bar{\pi}_{\gI,\epsilon}(\cdot|s_{t'})$ where the next state $s_{t'+1}$ is unsafe. This leads to the inner expectation (Eq.~\ref{eq:expanded-fp}~a) to yield a failure probability $> \epsilon$. This comes naturally from Assumption~\ref{assump2}, which also states the probability of such a transition is $> \epsilon$. In order for the cumulative discounted future probability $\sum_{t'=t+1}^\infty \gamma^{t'-t} \gI(s_{t'}) > 0$, a failure state would therefore need to be reached after some step $t'$ in the future. From Assumption~\ref{assump2}, expanding the expected future failure probability under $\bar{\pi}_\epsilon$ from $s_{t'}$ yields
% 
% Expanding the expectation of the failure probability (lhs) with summations should show that at least one future state can be a failure state
%
\begin{equation*}
\smashoperator[l]{\mathop{\E}_{s_{t'+1} \sim P(s_{t'},a_{t'})}} \gamma \gI(s_{t'+1}) 
= \gamma P(s_{t'+1}|s_{t'},a_{t'}) \stackrel{?}{\ge} \epsilon.  % \smashoperator{\sum_{a_{t'} \sim \bar{\pi}(s_{t'})}} \bar{\pi}(a_t'|s_{t'})P(s_{t'+1}|s_{t'},a_t')\gI(s_{t'+1}) 
% = \gamma \smashoperator{\sum_{a \in \gA_{\text{unsafe}}(s_{t'})}} \bar{\pi}(a|s_{t'})P(s_{t'+1}|s_{t'},a) \stackrel{?}{\ge} \epsilon
\end{equation*}
Next, let $\gA_\text{unsafe}(s_{t'}) := \{a : \bar{\pi}(a|s_{t'}) > 0, P(s_{t'+1}|s_{t'},a) > \epsilon, \gI(s_{t'+1}) = 1\}$.
Since actions that lead to unsafe states should have already been masked out by $\qsafe$ (from Equations~\ref{eq:masking},\,\ref{eq:qsafe-def}), if $\gA_\text{unsafe} \neq \emptyset$, then
\begin{equation*}
\implies 
\qsafe(s_{t'}, a_{t'}) = \mathop{\E}_{a_{t'} \sim \bar{\pi}^*(s_{t'})} \gamma \gI(s_{t'+1}) > \epsilon.
\end{equation*}
this yields a contradiction, since by the definition of a masked policy (Eq.~\ref{eq:masking}), $\bar{\pi}_{\gI, \epsilon}(a | s) > 0 \implies \qsafe(s, a) < \epsilon$.

Furthermore, under Assumption~\ref{assump3}, we are guaranteed that a safe action always exists at every state in $\gS_\text{safe}$, which means that the safety-constrained policy can always find a safe alternate action $a$ where $\Pr(s | s_{t'}, a) > 0$, where $s, s_{t'} \in \gS_\text{safe}$. 

Therefore, under no conditions will the discounted future failure probability of $\bar{\pi}_{\gI, \epsilon}$ being $\ge \epsilon$, assuming the policy does not start at an unsafe state $s_0$, i.e.\ $\mu(s) = 0 \ \forall \ s \in \{\gS_\text{unsafe}\}$.

\iffalse
%%BE.2.19: I rewrote a few lines above. You should check that it's correct.
%Since the masked policy only increases the probability of sampling actions where $\qsafe(s,a) < \epsilon$, switching to this policy does not cause this expectation to increase:
%\begin{equation}
%\E_{\bar{\pi}_{\gI, \epsilon}}(\1_{\gI(s_t')}| t'>t) \leq \E_\pi(\1_{\gI(s_{t'})}| t'>t).
%\end{equation}
%Intuitively, this is due to the fact that the expected failure probability over sampled actions is strictly capped at each timestep, so in aggregate the failure expectation over the trajectory distribution induced by following $\bar{\pi}$ can at most be $\epsilon$. Given Assumption~\ref{assump2}, this is guaranteed, since if there are actions in the trajectory that lead to near-failure states, the expected cumulative failure probability of those actions unmasked is greater than $\epsilon$ (in the case that $\gsafe=1$, and $\gI(s') = 1$). These actions are masked out of the support of the distribution over actions, and therefore, the probability of ever reaching a failure state is always bounded, at least at the last timestep, by $\qsafe$ and $\epsilon$. 
Mathematically, if we let $\rho_{\bar{\pi}}$ be the induced state-action distribution by following $\bar{\pi}_{\gI, \epsilon}$, then
\begin{equation}
    \E_{(s_t, a_t) \sim \rho_{\bar{\pi}}}\left[
        \sum_{t' \geq t} \gamma^{t'-t} \gI(s_{t'}) \right] 
        \le \E_{(s_t, a_t) \sim \rho_{\bar{\pi}}}\left[
            \qsafe(s_t, a_t) \right] < \epsilon
\end{equation}

To see that masking bounds failure probability even more simply, from the very first time-step at $s_0$, the action sampled by $\bar{\pi}_{\gI, \epsilon}$ is $a_0: \qsafe(s_0, a_0) < \epsilon$. The state distribution visited by $\bar{\pi}_{\gI, \epsilon}$ should be at least as safe as the one visited by the unconstrained policy $\pi$, since at any timestep $t$, its expected discounted future failure probability is bounded at $\epsilon$ via masking as shown in Eq.~\ref{eq:masking}.
Therefore, expanding the expectation in Eq.~\ref{eq:thm-general}, 
 \begin{align*}
    \E_{\bar{\pi}_{\gI, \epsilon}}\left[\sum_{t\geq0} \gamma^t \gI(s_t) \right]
        % &= \E_{\substack{
        % s_0 \sim \mu,\\
        % a_0 \sim \bar{\pi}_{\gI, \epsilon},\\
        % s_1 \sim P(\cdot\s_0,a_0)}} \gI(s_1) + 
        % \E_{\bar{\pi}_{\gI, \epsilon}}\left[\sum_{t>1} \gamma^t \gI(s_t) \right] \\
    &= \mathop{\E}_{\substack{s_0 \sim \mu,\\ a_0 \sim \bar{\pi}}} \gI(s_0)
        + \E_{\pi}\left[\sum_{t>0} \gamma^t \gI(s_t) \right]\\
        &\le \mathop{\E}_{\substack{s_0 \sim \mu,\\ a_0 \sim \bar{\pi}}} \qsafe(s_0, a_0) < \epsilon
\end{align*}
\fi
\end{proof}
This shows that following a policy with actions masked by the safety-critic limits the probability that the policy fails, and is equivalent even if the policy was trained separately from the safety-critic, since it will be at least as safe as the safety-critic's actor. Lastly, our method of masking out unsafe actions offers a general way of incorporating safety constraints into any policy using the safety-critic (Fig.~\ref{fig:safety_mask} illustrates this concept), and works as long as there are safe actions (i.e. \(\gA_\text{safe}(s) = \{ a~:~\pi(a|s) > 0,~\qsafe(s,a) < \epsilon \} \neq \emptyset\)) in the support of that policy's action distribution.

\begin{figure}[b]
    \centering
    \includegraphics[width=0.75\linewidth]{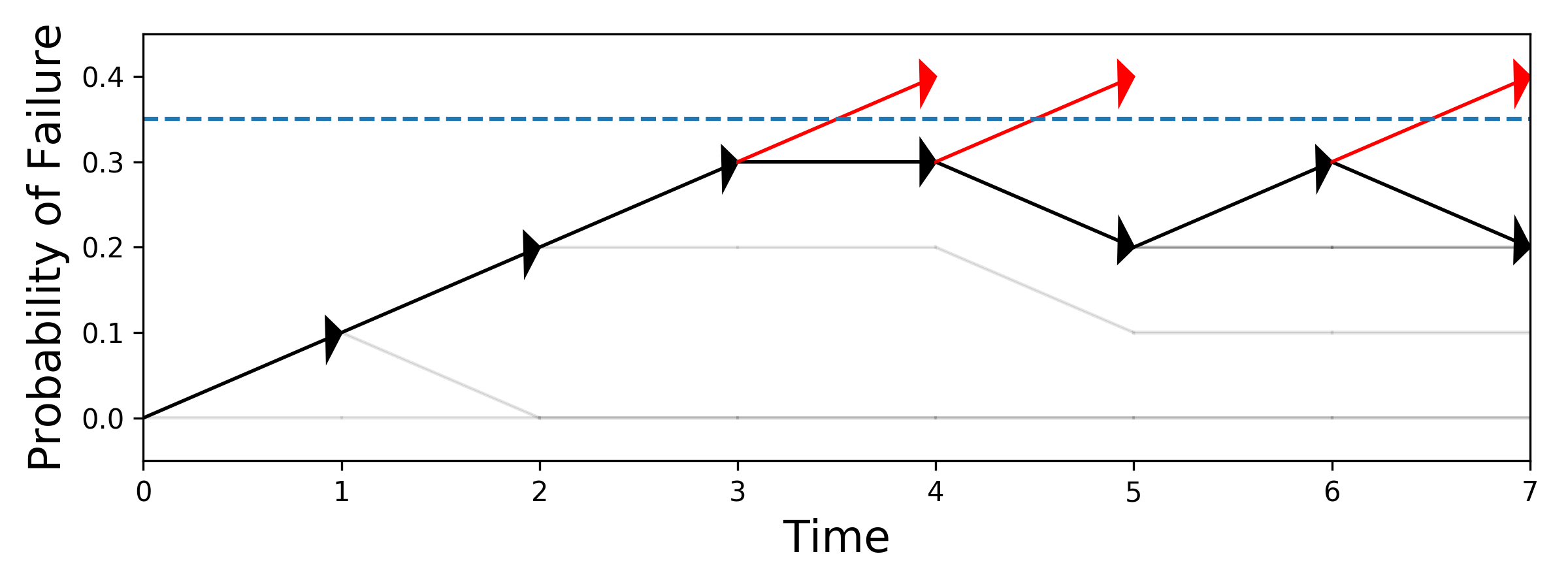}
    \vspace{-0.3cm}
    \caption{\textbf{Failure probability throughout an episode.} Arrows denote actions: black actions are those taken by our policy, gray actions are those that would have been taken by the safety critic, and red actions are those that are disallowed by the safety critic. By avoiding unsafe actions (red), the safety critic ensures that the failure probability remains below a given threshold (dashed line).}
    \label{fig:safety_mask}
    \vspace{0.2cm}
\end{figure}

% {\color{blue} Explain why this proof is important. It doesn't require that the Q function be learned in the same manner as the policy $\pi$. It provides a general way for mapping constraints on rewards into constraints on actions. I think this is very important, but it will be lost on the reader if we don't explicitly tell them}

A corollary of Theorem~\ref{thm:general} is that all iterates of the SQRL algorithm will be safe:
\begin{corollary}
Let $\pi_1^{SQRL}, \cdots, \pi_T^{SQRL}$ be the policies obtained by SQRL when using cost function $\gI$, safety threshold $\epsilon$, during the fine tuning stage. All policy iterates have an expected safety cost of at most $\epsilon$:
\begin{equation}
        \E_{\pi_i^{SQRL}}\left[\sum_t \gamma^t \gI(s_t) \right] \le \epsilon \qquad \forall 1 \le i \le T.
\end{equation}
\end{corollary}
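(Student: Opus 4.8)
The plan is to derive the corollary as an immediate consequence of Theorem~\ref{thm:general} by verifying that each SQRL fine-tuning iterate $\pi_i^{SQRL}$ is, by construction, of the form $\bar{\pi}_{\gI,\epsilon}$ for some underlying policy. First I would recall from the fine-tuning phase description (Algorithm~\ref{alg:sqrl-fine} and Eq.~\ref{eq:safe-pi}) that at every step of fine-tuning, actions are sampled from $\bar{\pi}_\theta = \Gamma^\epsilon_\text{safe}(\pi_\theta)$, i.e.\ the policy actually rolled out in the environment is always the $\qsafe$-masked version of the current unconstrained policy iterate $\pi_\theta$. Thus $\pi_i^{SQRL} = \bar{\pi}_{\gI,\epsilon}$ with the masking operator of Eq.~\ref{eq:masking} applied to the $i$-th unconstrained iterate.

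Next I would check that Assumptions~\ref{assump1-appendix}--\ref{assump4} (as restated in the appendix) are exactly the hypotheses under which Theorem~\ref{thm:general} holds, and that they are assumed to hold throughout the fine-tuning stage — in particular Assumption~\ref{assump4} guarantees $\pi_\theta$ has full support so the masked policy is well-defined and nonempty (together with Assumption~\ref{assump3}, which guarantees a safe action always exists in $\gS_\text{safe}$), and Assumption~\ref{assump1-appendix} ensures $\hat{Q}_\text{safe} = Q_\text{safe}^{\bar{\pi}^*}$ so the masking in Eq.~\ref{eq:masking} genuinely enforces $\qsafe^{\bar{\pi}^*}(s,a) < \epsilon$ on the support. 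Then I would simply invoke Theorem~\ref{thm:general} with $\bar{\pi}_{\gI,\epsilon} = \pi_i^{SQRL}$ to conclude
\begin{equation*}
    \E_{\pi_i^{SQRL}}\left[\sum_t \gamma^t \gI(s_t)\right] = \E_{\rho_{\bar{\pi}_{\gI,\epsilon}}}\left[\sum_{t=0}^\infty \gamma^t \gI(s_t)\right] < \epsilon,
\end{equation*}
and since this argument does not depend on $i$, it holds for all $1 \le i \le T$, giving the stated bound (with $\le \epsilon$ following a fortiori from the strict inequality).

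The one subtlety I would be careful about — and the main (minor) obstacle — is making sure the invocation of Theorem~\ref{thm:general} is legitimate for \emph{every} iterate and not just the pre-trained policy $\pi^*$: the theorem's proof masks using $\qsafe^{\bar{\pi}^*}$, the Q-function of the \emph{pre-trained} safety-constrained policy, and the corollary's iterates are masked with the same fixed $\hat{Q}_\text{safe}$ returned by pre-training, so the two notions of masking coincide and the contradiction argument in the proof of Theorem~\ref{thm:general} carries over verbatim with $\bar{\pi}^*$ replaced by $\pi_i^{SQRL}$ in the masking constraint. I would also note explicitly that the requirement $\mu_\text{target}(s) = 0$ for all $s \in \gS_\text{unsafe}$ is inherited from the safety-aware MDP setup of Section~\ref{sec:problem}. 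With these observations the corollary follows in a few lines and requires no new estimates.
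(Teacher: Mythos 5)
Your proposal matches the paper's own argument: the paper likewise proves the corollary by noting that, by construction, each fine-tuning iterate is the $\qsafe$-masked policy $\bar{\pi}_{\gI,\epsilon}$ (Eq.~\ref{eq:masking}, enforced via Eq.~\ref{eq:safe-pi} in Algorithm~\ref{alg:sqrl-fine}) and then invoking Theorem~\ref{thm:general}. Your added remarks --- checking that Assumptions~\ref{assump1-appendix}--\ref{assump4} persist during fine-tuning, that the masking uses the fixed pre-trained critic, and that $\mu_\text{target}$ places no mass on $\gS_\text{unsafe}$ --- are sound and only make explicit what the paper's two-line proof leaves implicit.
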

\begin{proof}
At each policy update step, the policies obtained by SQRL are returned from performing one-step updates on the previous step's policy using the SQRL fine-tuning policy objective in Eq.~\ref{eq:pol-ft}. By construction, each policy iterate is obtained by masking via the current safety critic $\qsafe^{\bar{\pi}_i}$, which is computed by evaluating that :

\begin{equation}
    \pi_i^{SQRL} \triangleq \bar{\pi_i}_{\gI, \epsilon}.
\end{equation}
Applying Theorem~\ref{thm:general}, we obtain the desired result.
\end{proof}

%\clearpage
\section{Hyperparameters}

In Table~\ref{hyper-table}, we list the hyperparameters we used across environments, which were chosen via cross-validation.

\begin{table}[h!]
\centering
\parbox{.95\linewidth}{
\caption{Default Hyperparameters used for training.}
\label{hyper-table}
\vskip 0.15in
\begin{center}
\begin{small}
\begin{sc}
\begin{tabular}{lc}
    \toprule
    \multicolumn{2}{c}{\textbf{General Parameters}} \\
    \midrule
    Learning rate & $3 \times 10^{-4}$ \\
    Number of pre-training steps & $5 \times 10^{5}$ \\
    Number of fine-tuning steps & $5 \times 10^{5}$ \\
    Layer size & 256 \\
    Number of hidden layers & 2 \\
    Hidden layer activation & ReLU \\
    Output activation & tan-h \\
    Optimizer & Adam \\
    \midrule
    \multicolumn{2}{c}{\textbf{SQRL Parameters}} \\
    \midrule
    Target Safety ($\esafe$) & $0.1$ \\
    Safety Discount ($\gamma_\text{safe}$) & $0.7$ \\
    \midrule
    \multicolumn{2}{c}{\textbf{E-SAC Parameters}} \\
    \midrule
    Number of Critics & $10$ \\
    \bottomrule
    \end{tabular}
\end{sc}
\end{small}
\end{center}
\vskip -0.1in
}
\\
\vspace{1em}
\parbox{.95\linewidth}{
\caption{Environment-Specific Hyperparameters used for training.}
\vskip 0.15in
\begin{center}
\begin{small}
\begin{sc}
\begin{tabular}{lc}
\toprule
\multicolumn{2}{c}{\textbf{DrunkSpider Parameters}} \\
\midrule
Episode length & $30$ \\
Action scale & $1$ \\
Action noise & $0.1$ \\
Number of pre-training steps & $5 \times 10^4$ \\
Safety Discount ($\gamma_\text{safe}$) & $0.65$ \\
\midrule
\multicolumn{2}{c}{\textbf{Minitaur Parameters}} \\
\midrule
Pre-training Goal Velocity & $0.3$ m/s \\
Fine-tuning Goal Velocity & $0.4$ m/s \\
Pre-training Goal Friction & $1$ \\
Fine-tuning Goal Friction & $1.25$ \\
Episode length & 500 \\
Number of pre-training steps & $5 \times 10^5$ \\
\midrule
\multicolumn{2}{c}{\textbf{CubeRotate Parameters}} \\
\midrule
Fine-tuning Goal Rotation & ($1.5 \times$ quarter rotation) \\
Episode length & 100 \\
Number of pre-training steps & $1 \times 10^6$ \\
\bottomrule
\end{tabular}
\end{sc}
\end{small}
\end{center}
\vskip -0.1in
}
\end{table}
% \textcolor{red}{add in env-specific hyperparameters, for DrunkSpider, Cube, and Minitaur}

\section{Environment Details}
\label{appendix:env}
The \textbf{MinitaurFriction} and \textbf{MinitaurVelocity} tasks are both identical in the pre-training phase, which involves reaching a desired target velocity of $0.3$ m/s at normal foot-friction. During fine-tuning, in the MinitaurFriction environment, the foot-friction is increased by $25\%$, while in the MinitaurVelocity environment, the goal velocity is increased by $33\%$ (to $0.4$ m/s). The reward function used for this environment at each timestep is $r_\text{minitaur}(s_t,a_t) = vel_g + |vel_g - vel_{cur}| - 0.01 \cdot acc$, where $acc$ is the approximate joint acceleration, calculated as $acc = a_t - 2\cdot a_{t-1} + a_{t-2}$.

In the \textbf{CubeRotate} environment, the pre-training task is to reach one of $4$ different rotated cube positions (up, down, left, and right) from a set of fixed starting positions, each of which correspond to a single quarter turn (i.e. $90$ degrees) from the starting position. We use an identical formulation of the reward to what is used in \cite{nagabandi2019deep}. For the fine-tuning task, we offset the cube by an additional eighth turn in each of those directions, and evaluate how well it can get to the goal position.

\section{Algorithm Implementation}

While our safety-critic in theory is trained estimating the Bellman target with respect to the safety-constrained policy, which is how we sample policy rollouts during safety-critic training, when computing the Bellman target, we sample future actions from the unconstrained policy. This is due to the fact that at the beginning of training, the safety-critic is too pessimistic, assuming most actions will fail, and as a result, falsely reject most safe actions that would lead to increasing reward.

Another implementation detail is the use of a smaller ``online'' replay buffer, as mentioned in Algorithm~\ref{alg:sqrl-fine}. This was used in order to get sampling to train the safety-critic to be more on-policy, since in practice, this seems to estimate the failure probabilities better, which improves the performance of the safety-constrained policy. 

\section{Training Plots}
\label{appendix:training-plots}

\begin{figure*}
\vspace{-1em}
    \centering
    \includegraphics[width=0.99\linewidth]{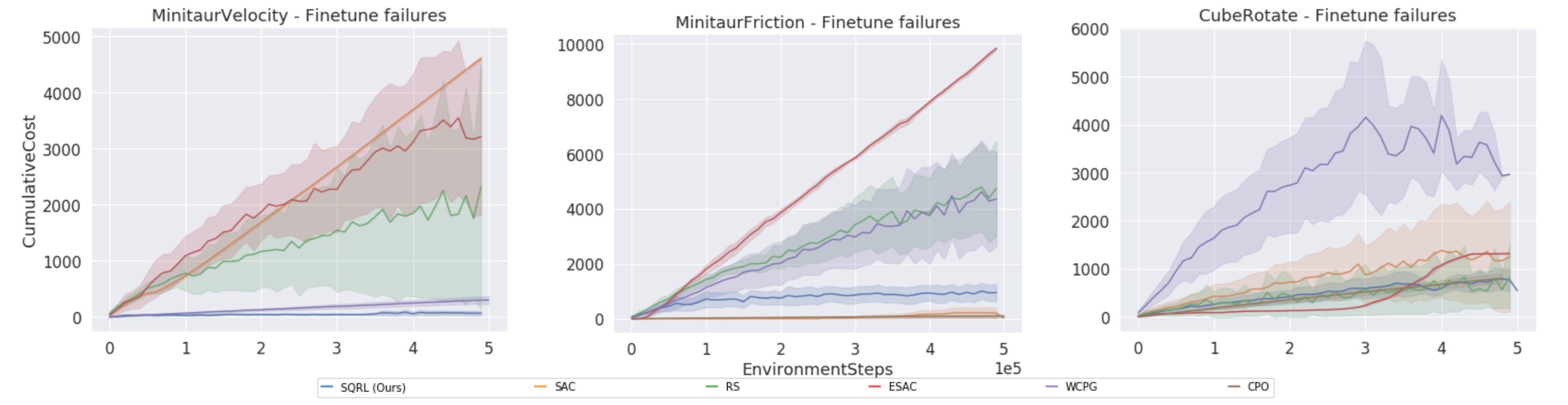}
    \vspace{-0.2cm}
    \caption{Cumulative failures over the course of fine-tuning. These results illustrate that SQRL leads to significantly fewer failures over the course of fine-tuning compared to prior safe RL approaches~\cite{tang2019worst,eysenbach2017leave}.}
    \label{fig:cumulative_failures}
\end{figure*}

In this section, we include additional training plots. We first show plots of the failures throughout fine-tuning in Figure~\ref{fig:cumulative_failures}, which illustrate how failures accumulate for different methods. We also include pre-training curves (Figs.~\ref{fig:safety_stability_pretrain_rew},~\ref{fig:safety_stability_pretrain_drop}) for the three environments. We additionally include an additional ablation plot (Fig.~\ref{fig:safety-ablation}) that shows that learning during pre-training with our method is also more stable, especially as the safety-critic is trained to be pessimistic ($\esafe = 0.05$), or optimistic ($\esafe = 0.15$), since it learns to explore safely while learning to perform the task. Each sharp drop in performance indicates that the agent experienced one or more early-reset conditions (due to unsafe-behavior) during an episode of training.

We also include an ablation (Fig.~\ref{fig:offline-ablation}) show how our method performs on the Minitaur environment during pretraining and finetuning (with a faster goal velocity) with and without online training of the safety critic. During pretraining, the online failure rate is qualitatively more stable, falling less often on average, while during finetuning, the task reward is significantly greater, thanks to mitigating the issue of an overly pessimistic critic. Finally, for additional environment-specific experiments, we first demonstrate how a SQRL agent can learn safely with physics randomization added to the Minitaur environment~\cite{tan2018sim}, where foot friction, base and leg masses are randomized. This shows that our method can also safely transfer to environments where the dynamics have changed, suggesting future work where this approach can be applied to sim-to-real transfer.

% \begin{figure*}[h!]
% \centering
%     \begin{subfigure}[b]{.49\linewidth}
%         \centering
%         \includegraphics[height=0.45\linewidth]{Figures/failure-rate-final.png}
%         \vspace{-0.2cm}
%         \caption{Safety failure rate throughout fine-tuning}
%         \label{fig:fail_rate}
%     \end{subfigure}
%     \begin{subfigure}[b]{.49\textwidth}
%         \includegraphics[width=.8\linewidth]{Figures/max_rew.png}
%         \vspace{-0.2cm}
%         \caption{Final return after fine-tuning}
%         \label{fig:max_rew}
%     \end{subfigure}
%     % \vspace{-0.2cm}
%     \caption{Final rewards and cumulative failures for our method and comparisons for our four training tasks. We find that SQRL achieves better or comparable performance while sustaining a lower cumulative failure rate.}
%     \label{fig:bar-plots}
% \end{figure*}

\begin{figure*}[h!]
\centering
    %\begin{subfigure}[b]{.32\linewidth}
        % \includegraphics[width=0.32\linewidth]{Figures/min-pm-rew.png}
        % \caption{Total failure rate during fine-tuning}
       % \label{fig:min-vel-ret-ft}
    %\end{subfigure}
    %\begin{subfigure}[b]{.32\linewidth}
        % \includegraphics[width=0.32\linewidth]{Figures/min-rew.png}
        % \caption{Total failure rate during fine-tuning}
     %   \label{fig:min-vel-ret-ft}
    %\end{subfigure}
    %\begin{subfigure}[b]{.32\linewidth}
        % \includegraphics[width=0.32\linewidth]{Figures/min-cube-rew.png}
        % \caption{Total failure rate during fine-tuning}
      %  \label{fig:min-vel-ret-ft}
    %\end{subfigure}
    \includegraphics[width=0.99\linewidth]{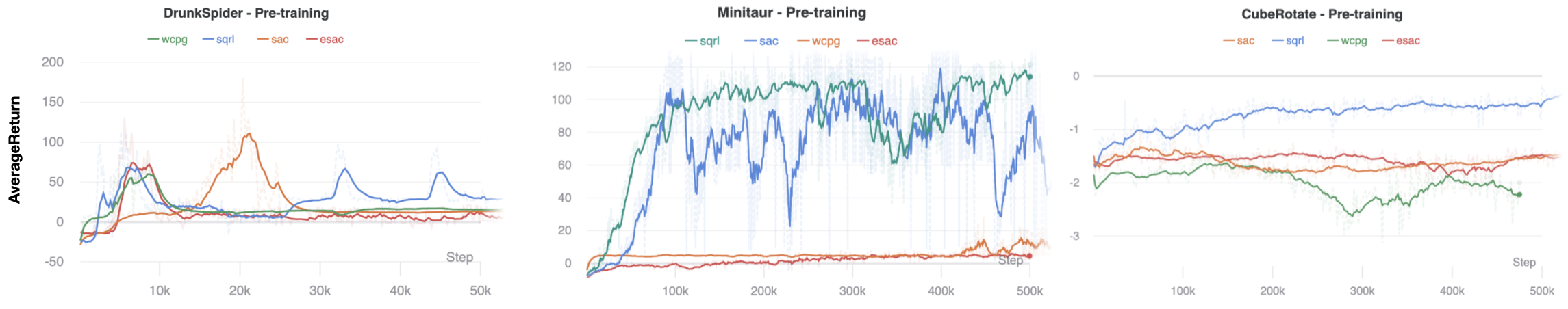}
    \vspace{-0.3cm}
    \caption{Pre-training curves for our method and comparisons for the Point-Mass, Minitaur and Cube environments. These results suggest that learning a policy alongside the safety-critic even during pre-training can lead to better performance. For the point-mass environment, the agent gets significantly more reward at the end of the trajectory if it learns to safely walk through the bridge without falling.}
    \label{fig:safety_stability_pretrain_rew}
    \vspace{-0.2cm}
\end{figure*}

\begin{figure*}[h!]
\centering
        \includegraphics[width=0.32\linewidth]{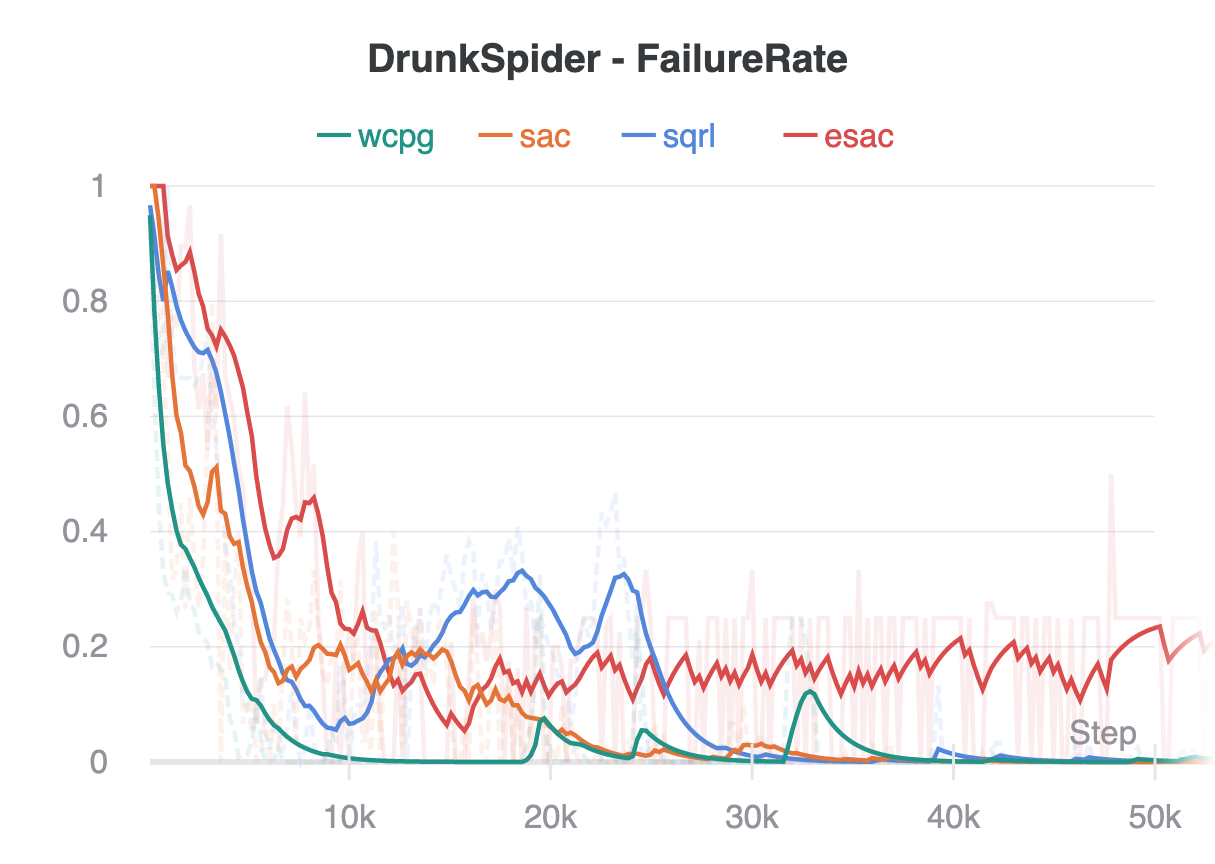}
        \includegraphics[width=0.32\linewidth]{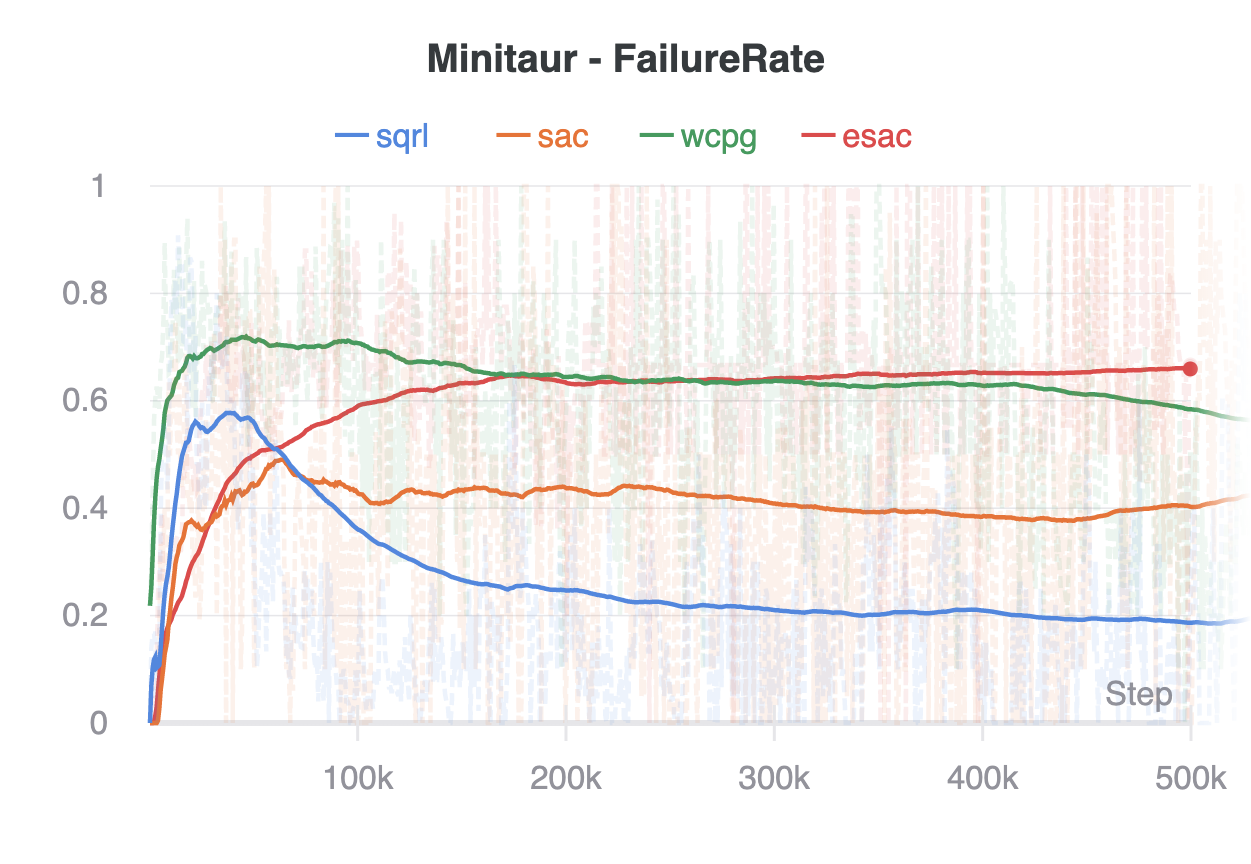}
        \includegraphics[width=0.32\linewidth]{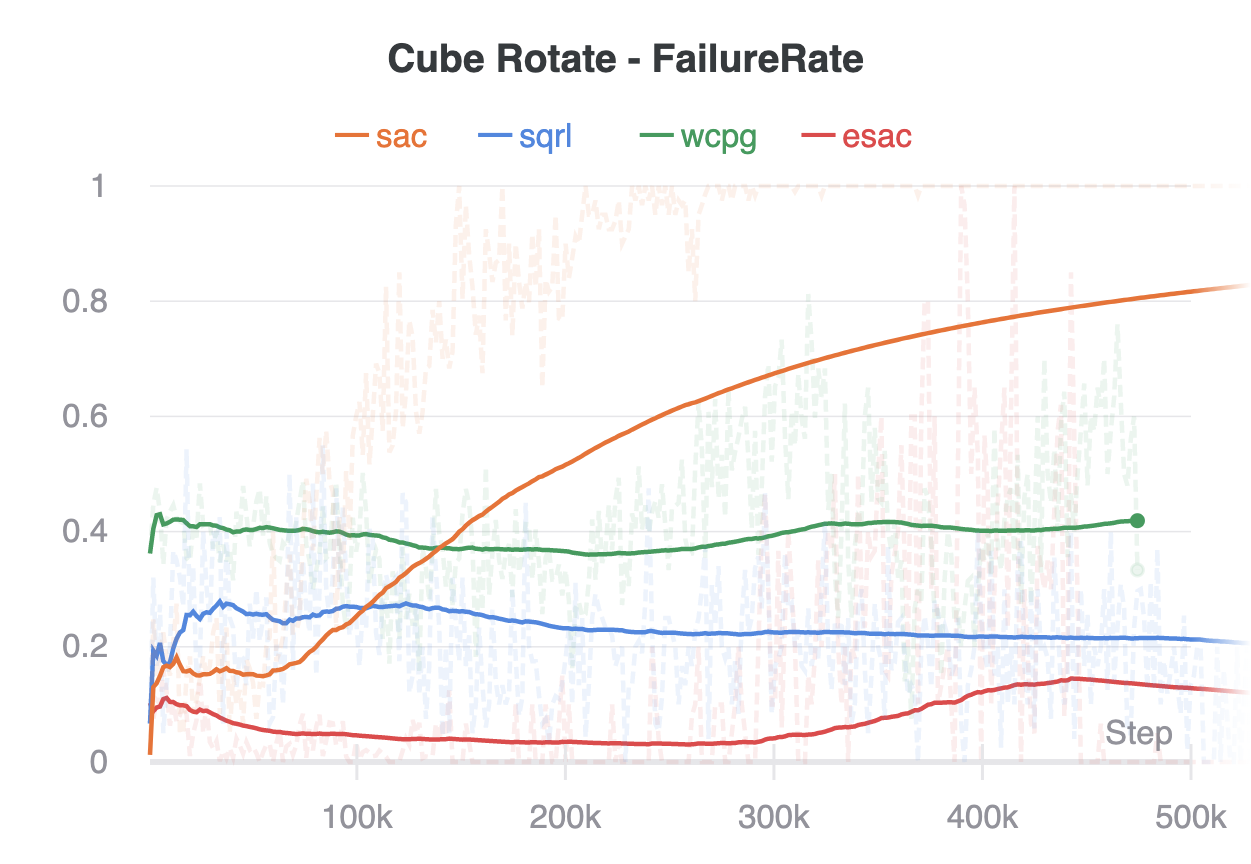}
    \vspace{-0.3cm}
    \caption{Pre-training failure rates over for our method and comparisons for the Point-Mass, Minitaur and Cube environments. These results suggest that learning a policy alongside the safety-critic shows improvements in safety even during pre-training.}
    \label{fig:safety_stability_pretrain_drop}
    \vspace{-0.3cm}
\end{figure*}

\begin{figure*}[t]
    \centering
    % \includegraphics[width=0.45\linewidth]{Figures/mini-pretrain-stable.pdf}
    % \caption{Pre-training curves showing that using a more optimistic $(\esafe = 0.15)$ or pessimistic $(\esafe = 0.05)$ target-safety threshold affects the stability of training to be more or less stable than SAC.}
    % \label{fig:safety-ablation}
    % \vspace{-0.3cm}
    \begin{subfigure}[b]{.45\linewidth}
        \includegraphics[height=0.655\linewidth]{Figures/mini-pretrain-stable.pdf}
        \caption{Using a more optimistic $(\esafe = 0.15)$ or pessimistic $(\esafe = 0.05)$ safety threshold affects the stability of training.}
        %\caption{Pre-training curves showing that using a more optimistic $(\esafe = 0.15)$ or pessimistic $(\esafe = 0.05)$ target-safety threshold affects the stability of training to be more or less stable than SAC.}
        \label{fig:safety-ablation}
        %\vspace{-0.45cm}
    \end{subfigure}
    \quad
    \begin{subfigure}[b]{.45\linewidth}
        \includegraphics[height=0.655\linewidth]{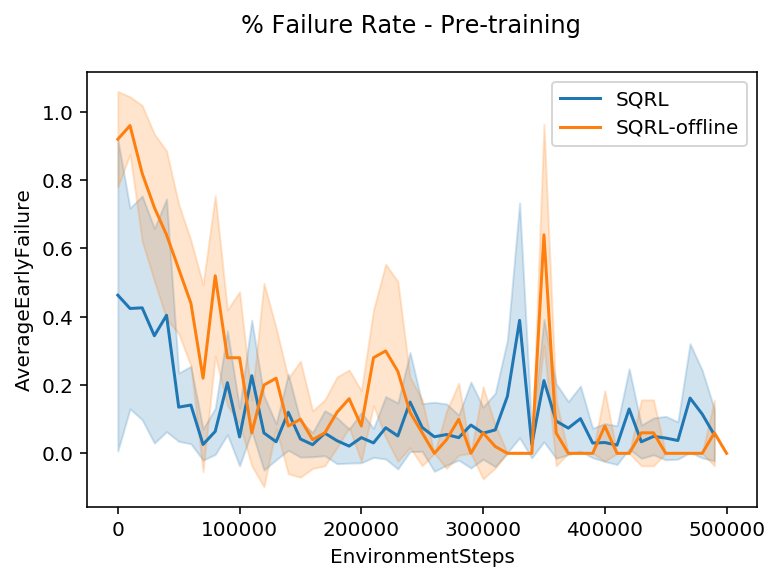}
        \caption{We visualize the failure rate of SQRL with and without online training of the safety critic during pre-training, with $\esafe=0.15$.}
    %\caption{Safety-critic offline and online pre-training for the minitaur environment, with $\esafe=0.15$}
    \label{fig:offline-ablation}
    \end{subfigure}
    \caption{Ablations of SQRL design choices.}
\end{figure*}

% \begin{figure}
%     \centering
%     \includegraphics[width=0.45\linewidth]{Figures/offline-ablation.png}
%     \caption{Safety-critic offline and online pre-training for the minitaur environment, with $\esafe=0.15$}
%     \label{fig:offline-ablation}
% \end{figure}

\end{document}